\newcommand*{\smax}{\mathop{\stackrel{\mathrm{soft}}{\max}}}
\newcommand*{\T}{{\mathsf{T}}}
\newtheorem{theorem}{Theorem}[section]
\newtheorem{lemma}[theorem]{Lemma}
\newtheorem{corollary}[theorem]{Corollary}
\DeclareMathOperator{\sech}{sech}
\DeclareMathOperator{\relu}{ReLU}
\DeclareMathOperator{\rank}{rank}
\DeclareMathOperator{\std}{std}
\title{Standalone Neural ODEs with Sensitivity Analysis}
\author{
  Rym Jaroudi
  \thanks{
  Department of Science and Technology, Linköping University, SE-601\,74 Norrköping, Sweden.} \\
  \texttt{rym.jaroudi@liu.se} \\
  \And
  Lukáš Malý\footnotemark[1] \\
  \texttt{lukas.maly@liu.se} \\
  \And
  Gabriel Eilertsen\footnotemark[1] \\
  \texttt{gabriel.eilertsen@liu.se} \\
  \And
  B.~Tomas Johansson\footnotemark[1] \\
  \texttt{tomas.johansson@liu.se} \\
  \And
  Jonas Unger\footnotemark[1] \\
  \texttt{jonas.unger@liu.se} \\
  \And
  George Baravdish\footnotemark[1] \\
  \texttt{george.baravdish@liu.se} 
}
\begin{document}

\maketitle

\begin{abstract}
	This paper presents the Standalone Neural ODE (sNODE), a continuous-depth neural ODE model capable of describing a full deep neural network. This uses a novel nonlinear conjugate gradient (NCG) descent optimization scheme for training, where the Sobolev gradient can be incorporated to improve smoothness of model weights. We also present a general formulation of the neural sensitivity problem and show how it is used in the NCG training. The sensitivity analysis provides a reliable measure of uncertainty propagation throughout a network, and can be used to study model robustness and to generate adversarial attacks. Our evaluations demonstrate that our novel formulations lead to increased robustness and performance as compared to ResNet models, and that it opens up for new opportunities for designing and developing machine learning with improved explainability.
\end{abstract}

\section{Introduction}
\setcounter{footnote}{0}
Neural ordinary differential equations (NODEs),~\citep{Chen2018}, have enabled the formulation of residual networks with continuous depth, presenting advantages including memory efficiency during training and the possibility of trading precision against computational cost during inference. Recently, NODEs have been extended to include time/depth\footnote{We will use time and depth interchangeably throughout the paper.} dependence of weights using the Galerkin ODE-net ~\citep{Massaroli2020} and Stateful ODE-net~\citep{Queiruga2021}. However, until now, it has not been rigorously demonstrated how a full neural network can be modeled by a self-contained ODE model, i.e. NODEs has only been considered as an intermediate block embedded between ``conventional'' layers for feature extraction and classification, see Figure~\ref{fig:node}.

This paper presents a \emph{Standalone Neural ODE}, sNODE, which is a continuous-depth NODE model that describes full trainable networks in a self-contained fashion. We propose a mini-batch nonlinear conjugate gradient (NCG) optimization scheme with a general cost functional, which covers a wide variety of loss functions and penalty terms involving $L^p$ and Sobolev norms. A key benefit of a sNODE model is that the mathematical properties of the model holds true for the end-to-end mapping, from input data points to predictions. Hence, our novel sNODE formulation enables, e.g., analysis of the behavior of predictions under different perturbations of data points or weights, and general sensitivity analysis of the mapping. To demonstrate the benefits of the sNODE network, we formulate the neural sensitivity problem and analyze how uncertainty due to input noise is propagated through the network. We show how this also generalizes to traditional ResNets trained with stochastic gradient descent (SGD). Furthermore, we show how the neural sensitivity problem can be used to generate and analyze adversarial attacks. 
The contributions of the paper can be summarized as:

\begin{figure}
	\centering
	\includegraphics[width=0.9\linewidth,trim={0 8.8cm 0 0},clip]{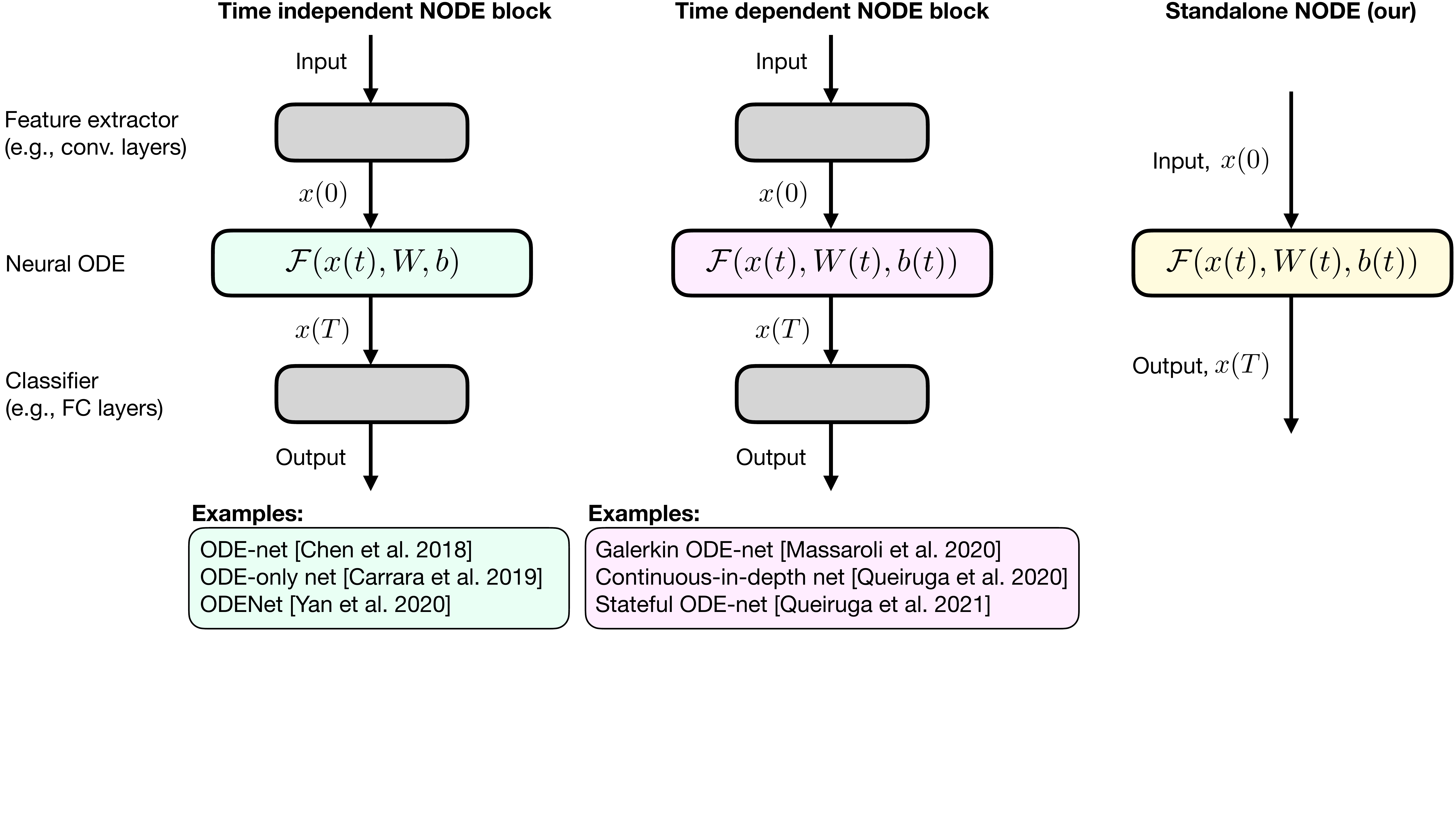}
	\caption{The original NODE formulation and some subsequent methods use the NODE as an intermediate block between feature extractor layers and a classifier, and is formulated without time dependence (left). Later work expand the NODE block to use time dependent parameterization (middle). We consider a Standalone neural ODE (right), which means that the NODE itself needs to perform feature extraction and classification mapping. Although more challenging, this allows for a theoretical formulation of the sensitivity of the network to input perturbations.}
	\label{fig:node}
\end{figure}

\begin{itemize}
	\item A novel NCG solver for Standalone depth-varying NODEs with a sensitivity analysis for determining the step length of the descent direction.
	\item An introduction of Sobolev regularization for maintaining accuracy when the level of noise is increasing in the input data. 
	\item A general sensitivity analysis formulation, which enables studying the robustness of trained weights, i.e., without requiring running inference with noisy data points. We show how the analysis generalizes beyond NODEs, e.g. to SGD-trained residual networks.
	\item A comparison of Euler discretization against an adaptive ODE solver, and a direct comparison of Euler NODEs to the equivalent SGD-trained residual networks.
	\item A neural sensitivity problem formulation for generating powerful adversarial examples for neural ODEs and equivalent SGD models.
\end{itemize}

Our analysis shows how sNODE models can be optimized using both Euler discretization and adaptive ODE solvers. This is an important contribution as it enables direct comparisons and provides a bridge between ODE-based models and ``conventional'' ResNets. This opens up for new opportunities for developing machine learning algorithms with theoretical foundation and explainable behavior.

\section{Related work}
\paragraph{Neural ODEs.}
NODEs were formally introduced by~\cite{Chen2018} as a representation of infinitely deep ResNet architectures. It has been further demonstrated that many of the networks commonly used in deep learning can be interpreted as different discretization schemes of differential equations~\citep{Lu2018}, but typically under the restriction that each layer is parameterized independently.
There is a range of follow-up work that consider different properties of NODEs. For example, it has been shown how the representations learned by NODEs preserve the topology of the input space, which can be a limiting factor in terms of model expressivity~\citep{Dupont2019}. To overcome this issue, the input space can be augmented to give the NODE a higher-dimensional space for finding more complex functions~\citep{Dupont2019}. It has also been demonstrated how NODEs offer an increased robustness with respect to noise perturbations and adversarial attacks~\citep{yan2019robustness}. NODEs can be further stabilized by injecting noise during training~\citep{Liu2019}. In the training of NODEs typically adjoint problems are used and these can be expensive in terms of computational usage, however, there is a recent study~\citep{matsubara2021symplectic} on efficient implementation of such adjoint problems.

The original work considers NODEs with no time/depth dependence, i.e., the same transformation is applied for each point in time. Other works have suggested formulating NODEs with time-dependence, using the Galerkin ODE-net~\citep{Massaroli2020} or the recent Stateful ODE-net~\citep{Queiruga2021}. However, until now NODEs have been considered an intermediate network component, which is embedded between conventional neural network layers, as illustrated in Figure~\ref{fig:node}. We focus on the problem of using NODEs in isolation, describing the end-to-end mapping of a neural network. The benefit of this formulation is that the mathematical properties of the model holds true, from input data points to predictions. This makes it possible to, e.g., analyze the behavior of predictions under different perturbations of data points or weights. As we will demonstrate, this enables a general sensitivity analysis of the mapping.
There is a recent example where NODEs were used in isolation from conventional network layers~\citep{baravdish2022learning} but only for small networks on 2D toy datasets.  
We substantially scale the size of the NODE network, and apply it to image input data. 

\paragraph{Sensitivity analysis.}
There are different notions of sensitivity within the neural network literature. Methods for explainable AI can be formulated for generating images that maximally activate some targeted layer or neuron~\citep{Simonyan2013,Zeiler2014,Yosinski2015}. Other methods investigate the sensitivity of the network output to changes in input~\citep{novak2018sensitivity,Pizarroso2020}. For example,
\cite{novak2018sensitivity} used the norm of the network’s Jacobian matrix as a sensitivity metric to conduct an empirical study and showed correlation between generalization and sensitivity in neural networks. There is also a range of previous work aimed towards finding the smallest changes in input images which lead to mispredictions, so called adversarial attacks~\citep{szegedy2013intriguing,goodfellow2014explaining,madry2018towards}. 
\cite{yan2019robustness} showed that ODE-Nets are more robust  than classical CNN models against both random Gaussian perturbations and adversarial attacks. \cite{carrara2019robustness} also studied the robustness of NODEs to adversarial attacks and arrived at the same conclusion. A way of defending against adversarial attacks, utilizing NODEs, is presented in~\citep{kang2021stable}.
In this paper, we analytically estimate a trained networks sensitivity to input perturbations without running perturbed data through the network. This is possible due to the Standalone NODE formulation. We show also how the sensitivity problem can be used for generating powerful adversarial attacks, as it allows us to find a direction for perturbation of the input that affects the output of the network with the greatest impact.

\paragraph{CG and Sobolev gradient.}
In the discrete case, training via the conjugate gradient (CG) has been tested earlier~\citep{bottou2018optimization}, and discrete Sobolev norms for neural network inputs and outputs have been investigated~\citep{Czarnecki2017sobolev}.  
We present a significantly different approach, where we apply NCG for minimizing a functional in the Lagrangian formulation in the continuous setting, with gradient descent obtained by a neural sensitivity problem. This optimization method is further extended to include the Sobolev gradient for trainable weights. Such problem formulation has only been considered very recently in~\citep{baravdish2022learning}, and we significantly expand on this.

\section{Standalone Neural ODEs}
The key goal of sNODE is to enable fully self-contained depth-varying neural ODEs capable of describing a full neural network as a standalone entity. 
The forward propagation of the input $\bm{x}^0\in\mathbb{R}^N$ through this network of infinite layers is given by
\begin{equation}
	\label{eq:xDE}
	\left\{
	\begin{aligned}
		\bm{x}'(t) & = \sigma(W(t)\bm{x}(t)+b(t)), \quad t \in I=(0,T) \\
		\bm{x}(0)  & = \bm{x}^0\quad\text{(given input layer)}         
	\end{aligned}
	\right.
\end{equation}
 
where $\sigma$ is a nonlinear activation function applied component-wise describing the activation of neurons and the affine transformations are represented by their layer-dependent weights, $W(t) \in \mathbb{R}^{N\times N}$ and biases, $b(t) \in \mathbb{R}^N$. The output layer is obtained at time $T$. 
In the numerical experiments, we will consider two commonly used nonlinear activation functions: the Rectified Linear Unit $\sigma(r) = \relu(r)$ with its derivative $\sigma'(r) = \smash{\chi_{\mathbb{R^{N}_{+}}}(r)}$ and the Hyperbolic Tangent $\sigma(r) =  \tanh(r)$ having as derivative $\sigma'(r) = \sech^{2}(r)$. 

The effect of perturbation of parameters and initial condition on the solution of~\eqref{eq:xDE} is described by
\begin{lemma}\label{lemmaSens}
	Assume that the solution $\bm{x}(t)$ of~\eqref{eq:xDE} is perturbed by $\varepsilon \bm{\xi}(t)$ when the parameters $W(t)$ and $b(t)$ are perturbed by $\varepsilon \omega(t)$ and $\varepsilon \beta(t)$ respectively (and when $\bm{x}^0$ is perturbed by $\varepsilon \bm{\xi}^0$), where  $\varepsilon>0$ is a small number. Then, the perturbation $\bm{\xi}(t)$ satisfies the linear equation given by \begin{equation}
	\left\{
	\begin{aligned}
		\bm{\xi}'(t) & = \sigma'\bigl(W(t) \bm{x}(t) + b(t)\bigr) \odot \bigl( \omega(t) \bm{x}(t) + \beta(t) + W(t) \bm{\xi}(t) \bigr), & t\in I, \\
		\bm{\xi}(0) & = \bm{\xi}^0,
	\end{aligned}
	\right.  
	\label{eq:sp}
	\end{equation}
	where $\sigma'$ is the derivative of $\sigma$. The \textbf{neural sensitivity problem} given by equation~\eqref{eq:sp} describes the propagation of perturbations of the parameters and the initial condition through the network. 
\end{lemma}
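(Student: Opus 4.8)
\emph{Proof proposal.} The plan is to treat the perturbed quantities as a one-parameter family in $\varepsilon$ and differentiate the governing equation in $\varepsilon$ at $\varepsilon=0$. Let $\bm{x}_\varepsilon(t)$ denote the solution of the perturbed initial value problem
\begin{equation*}
	\bm{x}_\varepsilon'(t) = \sigma\bigl((W(t)+\varepsilon\omega(t))\bm{x}_\varepsilon(t) + b(t) + \varepsilon\beta(t)\bigr), \qquad \bm{x}_\varepsilon(0) = \bm{x}^0 + \varepsilon\bm{\xi}^0,
\end{equation*}
so that $\bm{x}_0 = \bm{x}$ recovers \eqref{eq:xDE}. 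First I would invoke the standard theory of smooth dependence of ODE solutions on parameters and initial data to guarantee that $\varepsilon \mapsto \bm{x}_\varepsilon(t)$ is differentiable at $\varepsilon=0$, uniformly on the compact interval $[0,T]$, and set $\bm{\xi}(t) := \partial_\varepsilon \bm{x}_\varepsilon(t)\big|_{\varepsilon=0}$; this $\bm{\xi}$ is exactly the first-order term in the expansion $\bm{x}_\varepsilon(t) = \bm{x}(t) + \varepsilon\bm{\xi}(t) + o(\varepsilon)$ assumed in the statement.

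Next I would differentiate the perturbed equation with respect to $\varepsilon$ and evaluate at $\varepsilon = 0$. On the left-hand side, interchanging $\partial_\varepsilon$ and $\tfrac{d}{dt}$ (justified by the joint smoothness) gives $\partial_\varepsilon \bm{x}_\varepsilon'(t)\big|_{\varepsilon=0} = \bm{\xi}'(t)$. For the right-hand side, write $r_\varepsilon(t) := (W(t)+\varepsilon\omega(t))\bm{x}_\varepsilon(t) + b(t) + \varepsilon\beta(t)$, so $r_0(t) = W(t)\bm{x}(t) + b(t)$ and, by the product rule, $\partial_\varepsilon r_\varepsilon(t)\big|_{\varepsilon=0} = \omega(t)\bm{x}(t) + W(t)\bm{\xi}(t) + \beta(t)$. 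Since $\sigma$ acts componentwise, its Jacobian at $r_0(t)$ is the diagonal matrix $\operatorname{diag}\bigl(\sigma'(r_0(t))\bigr)$, so the chain rule gives
\begin{equation*}
	\partial_\varepsilon\, \sigma\bigl(r_\varepsilon(t)\bigr)\big|_{\varepsilon=0} = \sigma'\bigl(W(t)\bm{x}(t) + b(t)\bigr) \odot \bigl(\omega(t)\bm{x}(t) + W(t)\bm{\xi}(t) + \beta(t)\bigr),
\end{equation*}
which is precisely the right-hand side of \eqref{eq:sp}; differentiating the initial condition gives $\bm{\xi}(0) = \partial_\varepsilon(\bm{x}^0 + \varepsilon\bm{\xi}^0)\big|_{\varepsilon=0} = \bm{\xi}^0$. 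An equivalent route that avoids explicit $\varepsilon$-differentiation is to substitute the ansatz $\bm{x}_\varepsilon = \bm{x} + \varepsilon\bm{\xi} + o(\varepsilon)$ together with $W+\varepsilon\omega$ and $b+\varepsilon\beta$ into the perturbed ODE, Taylor-expand $\sigma$ about $W\bm{x}+b$ to first order, and collect the $O(\varepsilon)$ terms.

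I expect the main obstacle to be the regularity of the activation. The smooth-dependence argument requires $\sigma\in C^1$ along the trajectory, which holds for $\sigma=\tanh$ but not for $\sigma=\relu$, whose derivative is the indicator $\chi_{\mathbb{R}^N_+}$. For the ReLU case one should argue that, for generic data and parameters, each coordinate of the argument $t\mapsto W(t)\bm{x}(t)+b(t)$ vanishes only on a measure-zero set of times, so that \eqref{eq:sp} holds as a Carathéodory (a.e.) identity with $\bm{\xi}$ the correct one-sided directional derivative; alternatively one restricts to the open set of parameters for which no neuron switches between the active and inactive states. Once the coefficient $t\mapsto \sigma'(W(t)\bm{x}(t)+b(t))\odot W(t)$ and the forcing term are bounded and measurable, existence and uniqueness of a solution $\bm{\xi}$ of the linear problem \eqref{eq:sp} follow from standard linear ODE theory, so the remaining mathematical content is entirely the identification of the linearization carried out above.
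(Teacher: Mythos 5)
Your proposal is correct and takes essentially the same approach as the paper: both arguments linearize the perturbed ODE to first order in $\varepsilon$ (the paper subtracts the unperturbed problem from the perturbed one and lets $\varepsilon\to 0$, which is precisely the ``equivalent route'' you mention of substituting the ansatz and collecting the $O(\varepsilon)$ terms). If anything, your version is more careful than the paper's, since you explicitly invoke smooth dependence on parameters to justify the expansion and flag the non-differentiability of $\relu$ at the origin, a point the paper's proof passes over in silence.
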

The neural sensitivity problem provides information about dependence of output data on input and parameters.
Solving the neural sensitivity problem~\eqref{eq:sp} for $\omega(t)=\beta(t)=0$ describes the propagation of the perturbation along the layers when the input is perturbed by $\bm{\xi}^0$.   
This can be used for studying the robustness of neural networks and paving the way for explainability.
The solution to the neural sensitivity problem~\eqref{eq:sp} for $\bm{\xi}^0=0$, $\omega(t)\neq0$ and $\beta(t)\neq0$ will be used in the training procedure to determine the step size of the descent direction in the NCG. 

\section{Training sNODEs via a Nonlinear Conjugate Gradient method}\label{sec:ncg}
In the continuous setting, deep learning is interpreted as a parameter estimation problem where the weights and biases have to be identified \citep{haber2017stable}; therefore, given a batch of $K$ data points, each with input $\bm{x}_k\in\mathbb{R}^N$ and desired output $\bm{y}_k\in\mathbb{R}^N$, $k=1,2,\ldots, K$, the training procedure can be cast into minimizing the following cost functional 
\begin{align}
	\label{eq:E}
	E(W,b) & = \frac{1}{K} \sum_{k=1}^K \Biggl(\frac{\mu_1}{2} \| \bm{x}_k(T) - \bm{y}_k \|_{\ell^2}^2+ \mu_2\,H\bigl(\bm{y}_k, \smax (\bm{x}_k(T))\bigr) + \frac{\mu_3}{2} \| \bm{x}_k(T)  \|_{\ell^2}^2 \Biggr) \\ & \quad + \frac{\mu_4}{2}\bigl(\|W\|_{L^2}^2 + \|b\|^2_{L^2}\bigr)  \nonumber
\end{align}
subject to  
\begin{equation}
	\label{eq:xDE2}
	\left\{
	\begin{aligned}
		\bm{x}_k'(t) & = \sigma(W(t) \bm{x}_k(t)+ \bm{b}(t)), \quad t \in I := (0,T) \\
		\bm{x}_k(0)  & = \bm{x}_k^0                                                  
	\end{aligned}
	\right.
\end{equation}
with $\bm{x}_k(T)$ the output layer, and the layer-dependent weight-matrix $W$ and bias $b$ belong to a suitable class of functions (e.g a Hilbert space). Here, $H(\bm{p}, \bm{q}) = - \sum_i p_i \log(q_i)$ is the cross-entropy function for probability vectors $\bm{p}, \bm{q} \in \mathbb{R}^N$ with $\smax(\bm{x})= \frac{\exp(x_i)}{\sum_{j} \exp(x_j)}$  measuring the quality of the predicted class label probabilities.

The choice of the loss function is critical in defining the outputs in a way that is suitable for the application at hand. Cross-entropy (CE) is the de facto standard in classification tasks. However, it has been demonstrated that $\ell^2$ can produce results on par or even better than CE~\citep{Hui2021evaluation}. For this reason, it is interesting to compare the two different loss functionals, as they could have very different impact on the sensitivity of a trained model.
The cost functional (\ref{eq:E}) covers commonly used ones in deep learning. For $\mu_1=0$, $\mu_2>0$, we have the cross-entropy loss and $\mu_3>0$ is also used to explicitly constrain the output to be small. This regularization against explosive growth in the final layer may yield accuracy improvements comparable to
dropout \citep{dauphin2020deconstructing}. For $\mu_1>0$ and $\mu_2=\mu_3=0$, we have the $\ell^2$-loss. Finally, $\mu_4>0$ is used to include Tikhonov regularization, also referred to as weight decay. 

We use the Lagrange multiplier method to rewrite the constrained minimization problem as follows
\begin{align}
	\mathcal{E}(W,b) & =\frac{1}{K} \sum_{k=1}^K \Biggl(\frac{\mu_1}{2} \| \bm{x}_k(T) - \bm{y}_k \|_{\ell^2}^2+ \mu_2\,H\bigl(\bm{y}_k, \smax (\bm{x}_k(T))\bigr) + \frac{\mu_3}{2} \| \bm{x}_k(T)  \|_{\ell^2}^2 \Biggr) \\ & \quad + \frac{\mu_4}{2}\bigl(\|W\|_{L^2}^2 + \|b\|^2_{L^2}\bigr) + \frac{1}{K} \sum_{k=1}^K \int_{0}^{T} \bigl\langle \bm{\lambda}_{k}(t), \sigma(W(t)\bm{x}_k(t)+b(t))-\bm{x}_k'(t) \bigr\rangle\,dt, \nonumber
\end{align}
where $\bm{\lambda}_k(t)$ is the Lagrange multiplier.

We solve the minimization problem using a conjugate gradient based iterative method proposed by \cite{baravdish2022learning}, where the descent direction, involving the gradient of $E(W,b)$, are calculated via the adjoint problem (see Lemma~\ref{lemmaAdj} below) and the learning rate is determined by the neural sensitivity problem~\eqref{eq:sp} with $\bm{\xi}^0=0$. 
\begin{lemma}
	\label{lemmaAdj}
	The Fréchet derivative of $\mathcal{E}(W,b)$ with respect to the $L^2$-inner product is given by 
	\begin{align}
		\label{eq:E'W}
		\mathcal{E}'_{\omega} & = \mu_4 W + \frac{1}{K} \sum_{k=1}^K \bm{x}_k^\T (\sigma'(W \bm{x}_k +b) \odot \bm{\lambda}_{k}) \\
		\label{eq:E'b}
		\mathcal{E}'_{\beta}  & = \mu_4 b + \frac{1}{K} \sum_{k=1}^K (\sigma'(W \bm{x}_k +b) \odot \bm{\lambda}_{k})             
	\end{align}
	where the Lagrange multiplier $\bm{\lambda}_k(t)$ is the solution to the \textbf{adjoint problem} given by
	\begin{equation}
		\label{eq:aDE2}
		\left\{
		\begin{aligned}
			\bm{\lambda}_k'(t) & = - W(t)^\T \Bigl( \sigma'\bigl( W(t) \bm{x}_k(t) + b(t) \bigr) \odot \bm{\lambda}_k(t)\Bigr), \\
			\bm{\lambda}_k(T)  & = \mu_1(\bm{x}_k(T) - \bm{y}_k)+ \mu_2 (\smax(\bm{x}_k(T))-\bm{y}_k)+\mu_3 \bm{x}_k(T).        
		\end{aligned}
		\right.
	\end{equation}
\end{lemma}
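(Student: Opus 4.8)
The plan is to establish \eqref{eq:E'W}--\eqref{eq:E'b} by the classical Lagrangian (adjoint-state) argument. I perturb the parameters $W \mapsto W + \varepsilon\omega$, $b \mapsto b + \varepsilon\beta$, track the induced first-order perturbation $\varepsilon\bm{\xi}_k$ of each trajectory $\bm{x}_k$ via Lemma~\ref{lemmaSens} (with $\bm{\xi}_k^0 = 0$, since the data inputs are not perturbed), and differentiate $\mathcal{E}(W+\varepsilon\omega,b+\varepsilon\beta)$ at $\varepsilon = 0$. The key structural observation is that, because each $\bm{x}_k$ satisfies the constraint in \eqref{eq:xDE2}, the integral term in $\mathcal{E}$ vanishes identically for \emph{every} choice of the multiplier $\bm{\lambda}_k$; hence $\mathcal{E}$ and $E$ share the same Fréchet derivative, and we are free to pick $\bm{\lambda}_k$ so as to annihilate all terms containing the implicitly-defined perturbation $\bm{\xi}_k$. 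What remains then depends only on $\omega$ and $\beta$ and will be precisely the asserted gradient.

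Concretely, the derivative splits into three contributions. First, the Tikhonov term $\frac{\mu_4}{2}(\|W\|_{L^2}^2 + \|b\|_{L^2}^2)$ contributes $\mu_4\langle W,\omega\rangle_{L^2} + \mu_4\langle b,\beta\rangle_{L^2}$ directly. Second, the terminal terms contribute $\frac1K\sum_k \langle \mu_1(\bm{x}_k(T)-\bm{y}_k) + \mu_2(\smax(\bm{x}_k(T))-\bm{y}_k) + \mu_3\bm{x}_k(T),\ \bm{\xi}_k(T)\rangle$, where the only nonroutine piece is the gradient of the cross-entropy--softmax composition; a short computation gives $\partial_{x_j} H(\bm{y},\smax(\bm{x})) = \smax(\bm{x})_j - y_j$ using $\sum_i y_i = 1$, so the bracket is exactly $\bm{\lambda}_k(T)$ from \eqref{eq:aDE2}. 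Third, differentiating the constraint integral (the same chain-rule computation that produces \eqref{eq:sp}) gives $\frac1K\sum_k\int_0^T \langle \bm{\lambda}_k,\ \sigma'(W\bm{x}_k+b)\odot(\omega\bm{x}_k + \beta + W\bm{\xi}_k) - \bm{\xi}_k'\rangle\,dt$.

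The main manipulations come next. I integrate by parts, $-\int_0^T\langle\bm{\lambda}_k,\bm{\xi}_k'\rangle\,dt = -\langle\bm{\lambda}_k(T),\bm{\xi}_k(T)\rangle + \langle\bm{\lambda}_k(0),\bm{\xi}_k(0)\rangle + \int_0^T\langle\bm{\lambda}_k',\bm{\xi}_k\rangle\,dt$, where the boundary term at $0$ drops since $\bm{\xi}_k(0)=0$, and the boundary term at $T$ cancels the terminal contribution above provided we impose $\bm{\lambda}_k(T) = \mu_1(\bm{x}_k(T)-\bm{y}_k)+\mu_2(\smax(\bm{x}_k(T))-\bm{y}_k)+\mu_3\bm{x}_k(T)$. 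Using the self-adjointness of the Hadamard product, $\langle \bm{a}, \bm{c}\odot\bm{d}\rangle = \langle \bm{c}\odot\bm{a},\bm{d}\rangle$, and $\langle \bm{v}, W\bm{\xi}_k\rangle = \langle W^\T\bm{v},\bm{\xi}_k\rangle$, all remaining $\bm{\xi}_k$-dependence collapses to $\int_0^T\langle \bm{\lambda}_k' + W^\T(\sigma'(W\bm{x}_k+b)\odot\bm{\lambda}_k),\ \bm{\xi}_k\rangle\,dt$, which vanishes for every admissible $\bm{\xi}_k$ exactly when $\bm{\lambda}_k$ solves the adjoint ODE in \eqref{eq:aDE2}. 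With this $\bm{\lambda}_k$ the derivative reduces to $\mu_4\langle W,\omega\rangle_{L^2} + \mu_4\langle b,\beta\rangle_{L^2} + \frac1K\sum_k\int_0^T\langle\sigma'(W\bm{x}_k+b)\odot\bm{\lambda}_k,\ \omega\bm{x}_k + \beta\rangle\,dt$; reading off the Riesz representatives with respect to the $L^2$ inner products on $\mathbb{R}^{N\times N}$ and $\mathbb{R}^N$ — using $\langle \bm{v},\omega\bm{x}_k\rangle = \langle \bm{v}\,\bm{x}_k^\T,\omega\rangle$ for the weight part — yields \eqref{eq:E'W} and \eqref{eq:E'b}.

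The step I expect to be the genuine obstacle is justifying that $\bm{x}_k$ depends Fréchet-differentiably on $(W,b)$ with derivative given by the linearization \eqref{eq:sp}. This is immediate for $\sigma = \tanh$, whose derivative is smooth and bounded, so standard smooth-dependence theorems for ODEs apply; but for $\sigma = \relu$ the vector field is only Lipschitz, $\sigma'$ is a characteristic function, and differentiability of the flow must be argued either by mollification and passage to the limit, or by restricting to the (generic) set of parameters and inputs for which the trajectory meets the nondifferentiability set $\{W\bm{x}_k+b=0\}$ only on a Lebesgue-null set of times. The remainder of the proof is bookkeeping: integration by parts, the Hadamard/transpose identities, and identification of the Riesz representatives. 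I would therefore state the differentiability hypothesis explicitly and handle the ReLU case by approximation.
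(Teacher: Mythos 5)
Your proposal follows essentially the same route as the paper's proof: compute the directional derivative of the Lagrangian, integrate the $\langle\bm{\lambda}_k,\bm{\xi}_k'\rangle$ term by parts using $\bm{\xi}_k(0)=0$, choose the terminal condition and adjoint ODE to annihilate all $\bm{\xi}_k$-dependence, and read off the Riesz representatives via the Hadamard and transpose identities. Your added remarks — the explicit softmax--cross-entropy gradient computation using $\sum_i y_i = 1$, and the caveat that Fréchet differentiability of the flow needs separate justification for $\sigma=\relu$ — are correct points that the paper passes over in silence, and your sign $\bm{\lambda}_k' + W^\T(\sigma'\odot\bm{\lambda}_k)$ for the coefficient of $\bm{\xi}_k$ is the consistent one (the paper's displayed intermediate expression has a sign typo there).
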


The gradient descent direction given by the Fréchet derivative as obtained in Lemma~\ref{lemmaAdj} is the steepest one with respect to the $L^2$-inner product on the interval $(0, T)$. For regularization purposes, one may consider the descent direction that is steepest with respect to the Sobolev $W^{1,2}$-inner product obtained by the transformation~\eqref{eq:S-xform}, which results in smoother trained weights, see Appendix~\ref{app:ncg}.  

\begin{corollary}
	\label{cor:SobDer}	
	The Fréchet derivative of $\mathcal{E}(W,b)$ with respect to the Sobolev $W^{1,2}$-inner product is obtained by applying the transformation $\mathcal{S}$ defined by
	\begin{equation}
		\label{eq:S-xform}
		\mathcal{S}u(t) = \frac{\cosh(T-t)}{\sinh T} \int_0^t u(s) \cosh(s)\,ds + \frac{\cosh t}{\sinh T} \int_t^T u(s) \cosh(T-s)\,ds, \quad t\in [0, T],
	\end{equation}
	to each component $u \in L^2(0,T)$ of $\mathcal{E}'_{\omega}$ and $\mathcal{E}'_{\beta}$ in~\eqref{eq:E'W} and~\eqref{eq:E'b}, respectively.
\end{corollary}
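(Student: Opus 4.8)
The plan is to characterize the Sobolev gradient variationally and then recognize the operator $\mathcal{S}$ in~\eqref{eq:S-xform} as the Green's operator of the associated two-point boundary value problem. By Lemma~\ref{lemmaAdj}, for every admissible variation $(\omega,\beta)$ the Fréchet differential of $\mathcal{E}$ is $d\mathcal{E}(\omega,\beta) = \langle \mathcal{E}'_\omega, \omega\rangle_{L^2} + \langle \mathcal{E}'_\beta, \beta\rangle_{L^2}$, the $L^2$-pairings being taken entrywise over $(0,T)$. The $W^{1,2}$-gradient is by definition the element of the Sobolev space representing the \emph{same} differential with respect to $\langle u,v\rangle_{W^{1,2}} = \int_0^T \bigl(u(t)v(t) + u'(t)v'(t)\bigr)\,dt$, again entrywise. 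Since the computation is identical for every scalar component of $W$ and of $b$, it suffices to prove: given $g \in L^2(0,T)$, the unique $u \in W^{1,2}(0,T)$ with $\langle u,v\rangle_{W^{1,2}} = \langle g,v\rangle_{L^2}$ for all $v \in W^{1,2}(0,T)$ is $u = \mathcal{S}g$.

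First I would write out $\int_0^T (uv + u'v')\,dt = \int_0^T gv\,dt$ and integrate the middle term by parts. The boundary term $[u'v]_0^T$ vanishes for all admissible test functions $v$ precisely when $u$ obeys the natural (Neumann) conditions $u'(0)=u'(T)=0$, and what is left is $\int_0^T (u-u'')v\,dt = \int_0^T gv\,dt$ for all $v$; that is, $u$ is the weak solution of $-u'' + u = g$ on $(0,T)$ with $u'(0)=u'(T)=0$. Existence, uniqueness, and $W^{1,2}$-membership follow from the Lax–Milgram theorem, the bilinear form being exactly the (bounded, coercive) $W^{1,2}$-inner product, together with elliptic regularity.

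Next I would solve this boundary value problem explicitly by its Green's function. The homogeneous equation $-u''+u=0$ has solution space spanned by $\cosh t$ and $\cosh(T-t)$, the first satisfying the left and the second the right Neumann condition, and these are independent since their Wronskian equals $-\sinh T \neq 0$ for $T>0$. Setting $G(t,s) = c^{-1}\cosh t\,\cosh(T-s)$ for $t\le s$ and $G(t,s) = c^{-1}\cosh(T-t)\,\cosh s$ for $t\ge s$, continuity at $t=s$ holds automatically, and the unit jump $\partial_t G(s^+,s) - \partial_t G(s^-,s) = -1$ forced by the $-u''$ term, combined with the addition formula $\sinh(T-s)\cosh s + \cosh(T-s)\sinh s = \sinh T$, gives $c = \sinh T$. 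Hence $u(t) = \int_0^T G(t,s)g(s)\,ds$; splitting the integral at $s=t$ and inserting the two branches of $G$ produces exactly the expression $\mathcal{S}g$ of~\eqref{eq:S-xform}. Applying this componentwise to $\mathcal{E}'_\omega$ and $\mathcal{E}'_\beta$ finishes the proof.

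The only genuine subtlety — and the step I would take most care with — is the emergence of the Neumann conditions from the variational identity: one must verify that the test space $W^{1,2}(0,T)$ carries no boundary constraint, so that demanding $[u'v]_0^T=0$ for \emph{all} $v$ truly forces $u'(0)=u'(T)=0$ rather than a Dirichlet- or periodic-type condition; this is what pins down the particular hyperbolic kernel. Everything else — the Lax–Milgram step, the Green's-function bookkeeping, and the final interchange in the order of integration — is routine.
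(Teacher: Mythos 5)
Your proposal is correct and follows essentially the same route as the paper: both identify the $W^{1,2}$-representative as the solution of the Neumann problem $-u''+u=g$, $u'(0)=u'(T)=0$, and verify the representation identity by integration by parts. The only difference is one of direction — you derive the kernel of $\mathcal{S}$ as the Green's function of that boundary value problem (with Lax--Milgram for well-posedness), whereas the paper verifies by direct calculation that the given formula solves the boundary value problem for continuous $u$ and then passes to general $u\in L^2$ by density.
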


The proofs of Lemma~\ref{lemmaAdj} and Corollary~\ref{cor:SobDer} are included in Appendix~\ref{app:proofs}. The NCG algorithm together with details for computing the learning rate are in Appendix~\ref{app:ncg}.

\section{Numerical experiments} 
We perform experiments on the  MNIST~\citep{lecun1998mnist} dataset, which is composed of images of hand-written digits from~0 to~9, with a total of 60K training images and 10K testing images. We linearly resample the images to 14×14 pixel resolution and then vectorize them to a $196$-dimensional input.
With the intention of enabling comparisons of the continuous setting, i.e., sNODEs with the NCG training method, to the existing discrete setting, i.e., ResNets with standard stochastic gradient descent (SGD), we consider sNODEs on the interval $I=(0,T)$ of length $T = 3$ and ResNets composed of $150$ layers corresponding to the discretization level used in the sNODEs. The activation functions are scaled by the factor $1/50$, to have an exact copy of an sNODE solved with Euler discretization. The hidden layers have the same width as the number of pixels in the input images, i.e., $196$ for all models. In the final layer, only $10$ dimensions are used for predicting the $10$ classes.

To train the sNODEs, we apply steps of the NCG algorithm with data batches comprising of $K = 100$ images. Initially, normally distributed weights and bias functions $W^0(t)$ and $b^0(t)$ with mean $0$ and standard deviation $0.001$ are introduced. Each batch was used in $6$ iterations of the NCG. When a given set of data points $\bm{x}_k^0$ is to be classified, it is fed into the network, i.e., sNODE with reconstructed weights and biases, via the initial condition of the direct problem~\eqref{eq:xDE}. The final value $\bm{x}_k(T)$ of the solution to the differential equation is then used to determine the class of the respective point. For training of the discrete model, SGD is used through the RMSprop optimizer~\citep{hinton2012neural} with an initial learning rate of $0.01$. Apart from this, the setup is exactly replicating the NCG-trained sNODE, with the same initialization, batch size, training iterations, loss function, etc.

First, we show that, as a numerical scheme of solving sNODEs, Euler discretization outperforms ODE45 solvers besides allowing for a direct comparison between sNODE models and ResNets. Next, we study the performance and robustness capabilities of NCG-trained sNODEs for tanh/ReLU activation functions, $\ell^2$/CE-loss with/without weight decay, in comparison to SGD-trained ResNets, and we evaluate the ability of the neural sensitivity problem of measuring the perturbation propagation throughout the network.
Finally, we show how the neural sensitivity problem can be used both to study the robustness of the trained models and to generate adversarial attacks.
The NCG trainings as well as the sensitivity analysis were implemented in MATLAB, while SGD experiments were performed through Keras/TensorFlow and trained on an Nvidia Qudaro RTX 5000. Adversarial attacks for comparisons have been produced through the Adversarial Robustness Toolbox\footnote{\url{https://adversarial-robustness-toolbox.readthedocs.io}}.

\textbf{Euler discretization performs well in comparison to an adaptive ODE solver.}
In this experiment, we compare the Euler numerical scheme against an adaptive ODE solver (ODE45) for the discretization of sNODE when NCG training is considered on (14×14) MNIST. We note that the ODE45 solver was previously tested by \cite{baravdish2022learning} in the same setting of training sNODE via NCG; however, experiments were limited to 2D toy datasets. 
We choose tanh activation functions due to their smoothness property, and set $\mu_1=1$, $\mu_2=\mu_3=\mu_4=0$ in~\eqref{eq:E} hence minimizing the $\ell^2$-loss without weight decay. We perform $5$ separate training runs, each over $20$ epochs, and report the mean and standard deviation of training and testing accuracies during training in Figure~\ref{fig:mix_ode_eul}. 
We also test the effect of running inference through Euler discretized models with weights and biases learned through NCG where ODE45 solvers were used, and vice versa (see Table~\ref{tab:mix_ode_eul}).

\begin{figure}[h]
	\begin{subfigure}[b]{0.45\textwidth}
		\centering
		\includegraphics[width=\textwidth]{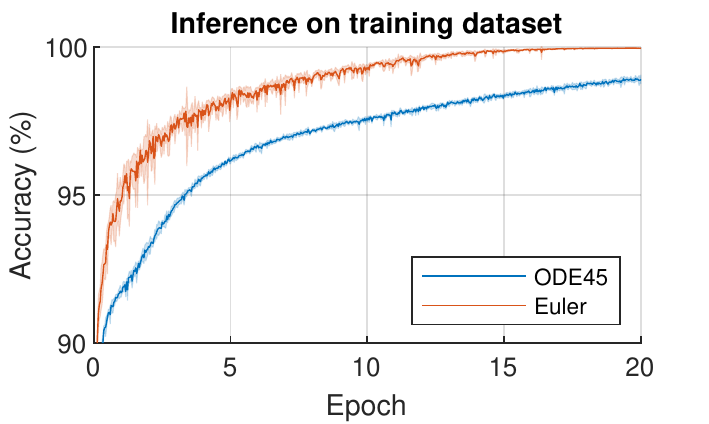}
	\end{subfigure}
	\hfill
	\begin{subfigure}[b]{0.45\textwidth}
		\centering
		\includegraphics[width=\textwidth]{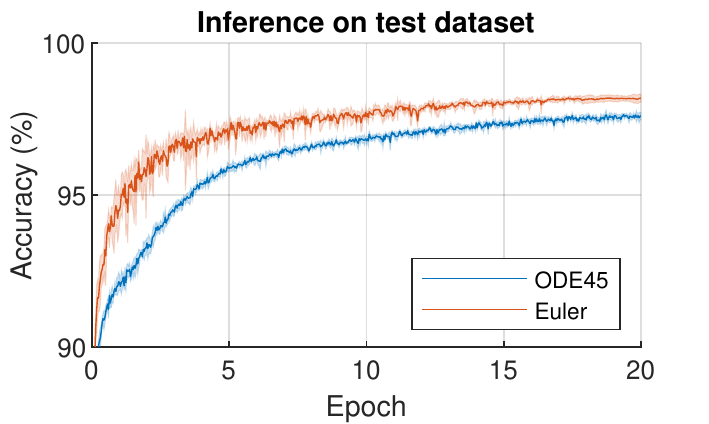}
	\end{subfigure}
	\caption{Mean and standard deviation of training and test accuracies over $5$ separate runs of sNODEs with NCG when using Euler discretization compared to the ODE45 solver.}
	\label{fig:mix_ode_eul}
	\vspace{-5mm}
\end{figure}

As reported in Figure~\ref{fig:mix_ode_eul} and Table~\ref{tab:mix_ode_eul}, using the Euler discretization rather than the ODE45 scheme may improve the model capabilities in terms of optimization via NCG and performance. Also, this setting makes it possible to perform a direct comparison between NODEs and SGD-trained residual networks, where the number of layers is set in accordance with the Euler discretization step.

\begin{table}[h]
	\vspace{-3mm}
	\caption{Mean and standard deviation of training and test accuracies ($\%$) over 5 separate runs on (14×14) MNIST for 20 epochs when Euler discretization and/or the ODE45 solver are considered.}
	\label{tab:mix_ode_eul}
	\begin{center}
		\begin{tabular}{l|l|l|l|l} 
			Training          & Euler          & Euler          & ODE45          & ODE45          \\
			Inference         & Euler          & ODE45          & ODE45          & Euler          \\
			\hline
			Training Accuracy & $99.94\pm0.01$ & $98.99\pm0.25$ & $98.90\pm0.15$ & $98.99\pm0.11$ \\ 
			Test Accuracy &  $98.19\pm0.13$ & $97.97\pm0.26$   & 
			$97.60\pm0.12$ & $97.64\pm0.10$ \\ 
		\end{tabular}
	\end{center}
\end{table}

\textbf{NCG-trained Euler sNODE models have the same or better accuracy and robustness when compared to SGD-trained ResNets.} We investigate the generalization and robustness capabilities of NCG-trained sNODEs in comparison to their equivalent SGD-trained ResNets under different settings. For the optimization, we compared minimizing CE-loss with $\mu_2=1$, $\mu_3=0.4$ and $\mu_1=0$ in~\eqref{eq:E} to $\ell^2$-loss obtained by $\mu_1=\mu_3=0$ and $\mu_2=1$. We also inspected the effect of regularization through weight decay by considering $\mu_4=0$ and $\mu_4=10^{-8}$. For each combination of activation and cost, we used NCG for sNODEs and SGD for ResNets to carry out $5$ separate trainings over $10$ epochs for each model, then we ran inference on clean datasets and noisy datasets obtained from perturbing the clean ones by Gaussian noises of different standard deviations $\{0.01, 0.03, 0.05, 0.08, 0.1, 0.13, 0.15, 0.18, 0.2\}$, and computed the accuracy of the classification. 
The models performance (classification accuracy on clean data) and robustness (classification accuracy on noisy data) are indicated respectively in Table~\ref{tab:all_acc} and Figure~\ref{fig:all_acc_nois}. 
\begin{table}[ht]
	\vspace{-5mm}
	\caption{Mean and standard deviation of classification accuracies ($\%$) on clean data of NCG-trained sNODEs and SGD-trained ResNets with different activations, loss and regularizations.}
	\label{tab:all_acc}
	\begin{center}
		\begin{tabular}{lcccc} 
			&\multicolumn{2}{c} {\textbf{Cross-entropy loss}} &\multicolumn{2}{c}{\textbf{$\ell^2$-loss}} \\ 
			         & ReLU           & tanh           & ReLU           & tanh           \\ 
			\hline 
			SGD      & $96.30\pm0.37$ & $95.21\pm0.26$ & $97.92\pm0.14$ & $97.92\pm0.05$ \\  
			SGD w.d. & $96.05\pm0.29$ & $95.40\pm0.32$ & $98.03\pm0.20$ & $97.82\pm0.15$ \\ 
			\hline
			NCG      & $98.05\pm0.10$ & $96.99\pm0.25$ & $98.03\pm0.22$ & $97.80\pm0.06$ \\  
			NCG w.d. & $97.60\pm0.08$ & $97.67\pm0.14$ & $97.73\pm0.08$ & $97.83\pm0.05$ \\ 
			NCG Sob. & $95.84\pm0.15$ & $97.13\pm0.06$ & $96.32\pm0.64$ & $96.93\pm0.06$ 
		\end{tabular}
	\end{center}
\end{table}

NCG-trained sNODEs outperform SGD-trained ResNets when CE-loss is considered whether weight decay is used or not, and they have similar robustness without weight decay. With $\ell^2$-loss, both have similar performance but with NCG sNODE being more robust when no weight decay is used. 
Using $\ell^2$-loss improves the accuracy of SGD-trained ResNets while worsening their robustness; however, it has no significant effect on the performance nor the robustness of NCG sNODEs. 
For each model, weight decay has no significant effect on its performance; however, it strongly decreases the robustness of NCG sNODEs. Employing Sobolev gradient descent in NCG training of sNODEs with tanh activation functions produces the most robust models.
Finally, by examining the magnitude of $\| \bm{x}(t) \|$, we note that the value of $\mu_3=0.4$ used in the experiments might not be optimal for Sobolev gradient descent with CE loss. A hyper-parameter search could potentially improve on this.

\begin{figure}[t]
	\begin{centering}
		\includegraphics[width=0.85\linewidth]{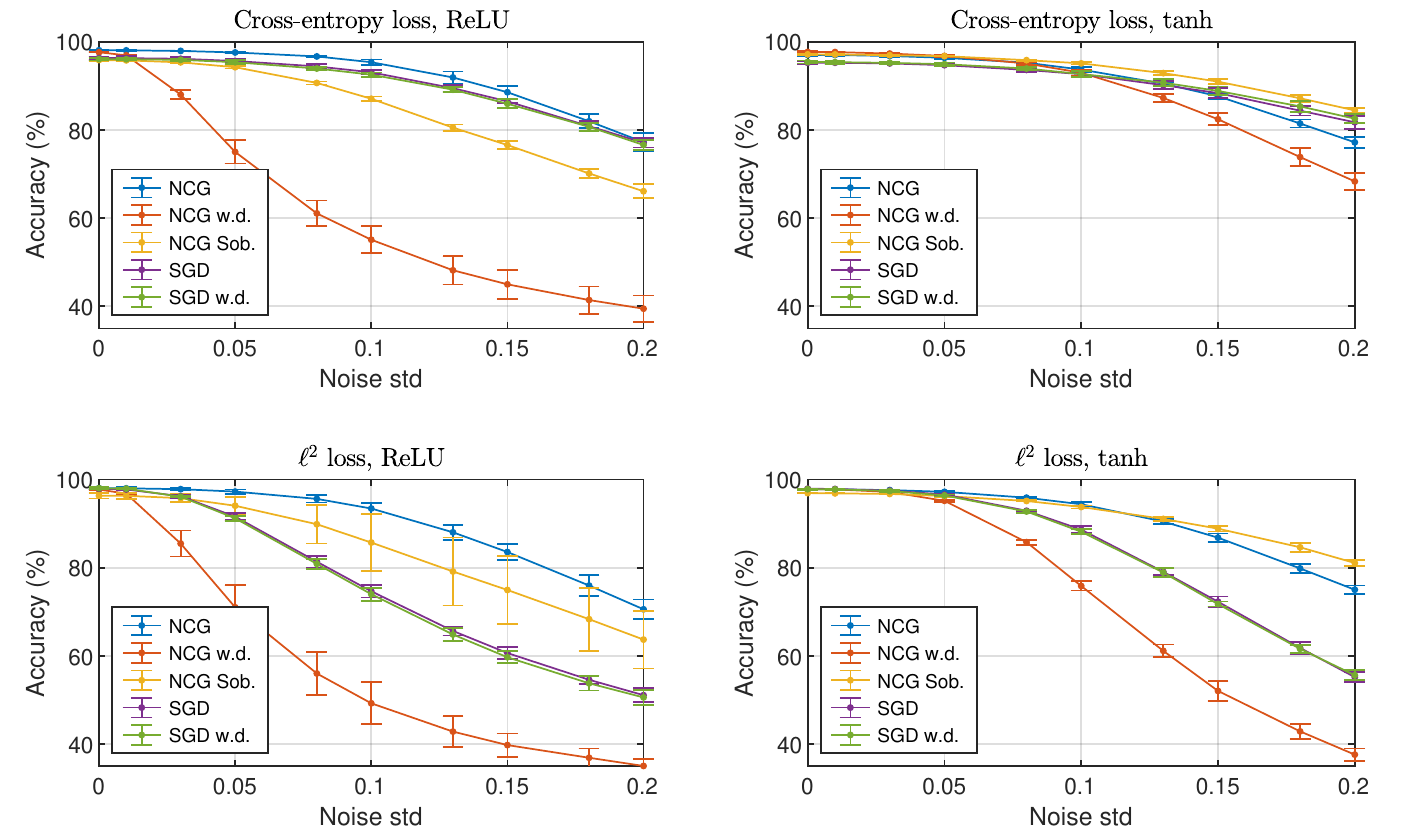}
		\caption{Robustness of different models and training settings. The plots show means and standard deviations of classification accuracies over 5 runs, on test data with varying amount of noise applied.}
		\label{fig:all_acc_nois}
	\end{centering}
	\vspace{-0.7cm}
\end{figure}

\textbf{The general sensitivity problem provides a good approximate description of noise propagation.}
We examine how well the solution $\bm{\xi}(t)$ of the neural sensitivity problem~\eqref{eq:sp} for $\omega(t)=\beta(t)=0$ and $\bm{\xi}^0$ a Gaussian noise with standard deviation $0.01$, estimates the measured perturbation $\bm{\delta}(t)$ throughout the network, with $\bm{\delta}(t):=\widetilde{\bm{x}}(t)-\bm{x}(t)$, where $\widetilde{\bm{x}}(t)$ and $\bm{x}(t)$ are the respective solutions to~\eqref{eq:xDE} with a noisy input $\bm{x}^0+\bm{\xi}^0$ and a clean input $\bm{x}^0$. We consider weights and biases obtained from the previous experiment to cover different settings: NCG-trained sNODEs/SGD-trained ResNets, tanh/ReLU activation functions, $\ell^2$/CE-loss, with and without weight decay, with and without Sobolev gradient descent. The estimated perturbation $\lVert \bm{\xi}(t) \rVert / \lVert \bm{x}(t) \rVert$ and the measured perturbation $\lVert \bm{\delta}(t) \rVert / \lVert \bm{x}(t) \rVert$ throughout the NCG-trained sNODEs and the SGD-trained ResNets with $\ell^2$-loss are presented in Figures~\ref{fig:sp_xi_delta_l2}. Similar experiments for the CE-loss setting, together with relative estimation errors $\lVert \bm{\xi}(t)-\bm{\delta}(t) \rVert / \lVert \bm{\xi}(t) \rVert$ are presented in Appendix~\ref{app:sensitivity}.  

\begin{figure}[ht]
	\begin{centering}
		\begin{subfigure}[b]{0.4\textwidth}
			\centering
			\includegraphics[width=\textwidth]{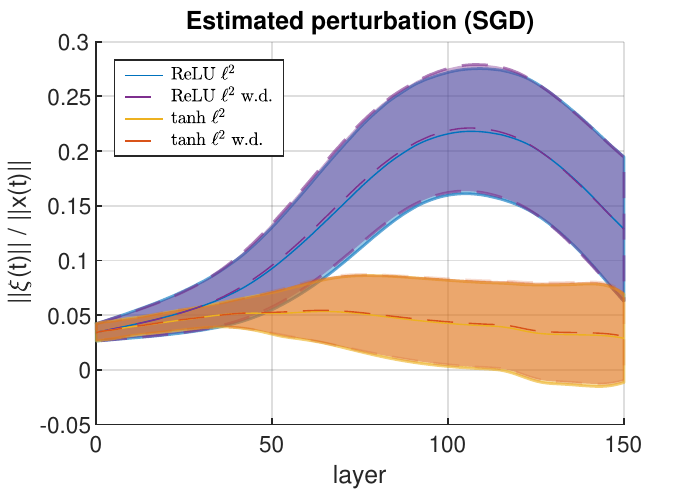}
		\end{subfigure}
		\begin{subfigure}[b]{0.4\textwidth}
			\centering
			\includegraphics[width=\textwidth]{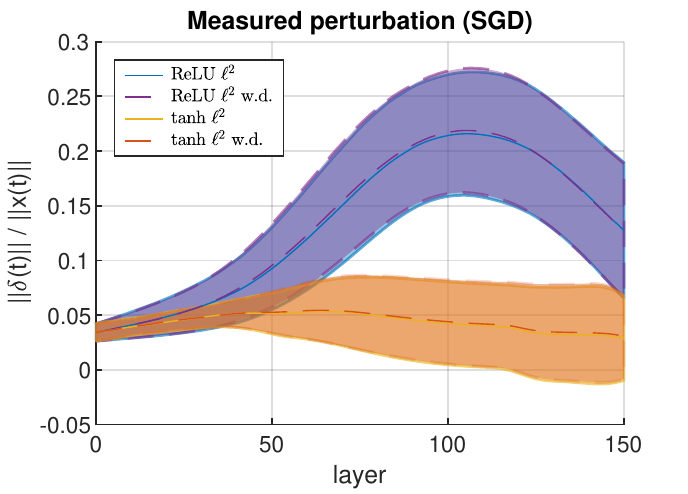}
		\end{subfigure}
		\begin{subfigure}[b]{0.4\textwidth}
			\centering
			\includegraphics[width=\textwidth]{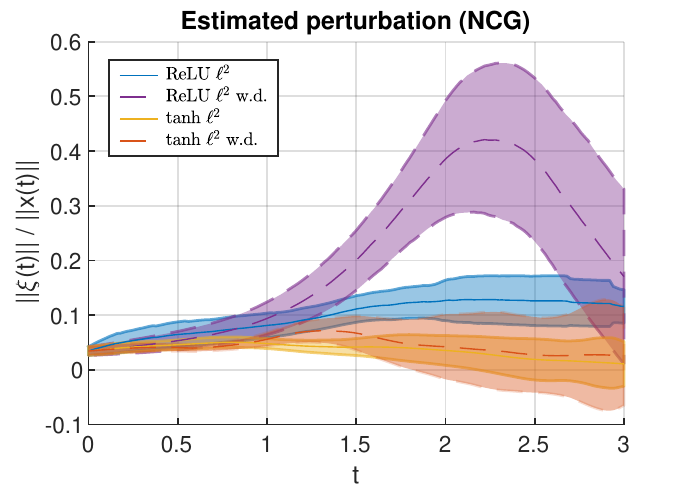}
		\end{subfigure}
		\begin{subfigure}[b]{0.4\textwidth}
			\centering
			\includegraphics[width=\textwidth]{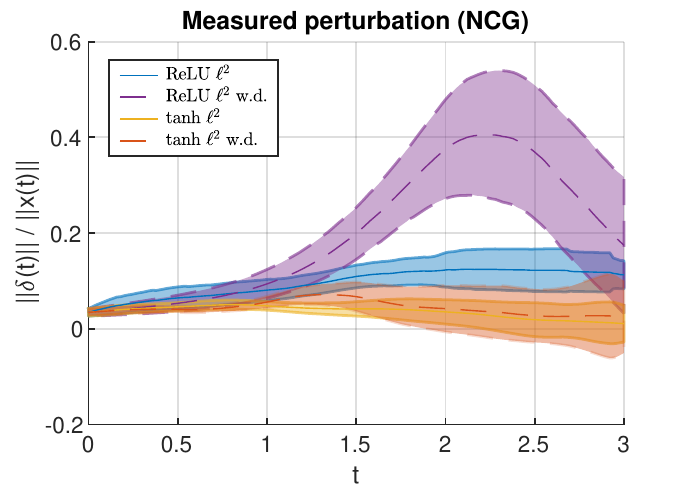}
		\end{subfigure}
		\caption{Evaluation of the neural sensitivity problem. Estimated propagation of the perturbation (left panel) compared to the measured perturbation (right panel) when $\std(\bm{\xi}^0)=\std(\bm{\delta}^0)=0.01$ through different SGD-trained ResNets (top row) and NCG-trained sNODEs (bottom row) using $\ell^2$-loss. }
		\label{fig:sp_xi_delta_l2}
	\end{centering}
	\vspace{-5mm}
\end{figure}

We note that weight decay has a significant effect on the sensitivity of NCG-trained sNODEs when tanh is used, i.e.,  perturbations are more controlled. This is in contrast to SGD-trained ResNets, where weight decay has no effect on the sensitivity. 
The obtained results corroborated well with what could be expected theoretically, and indicate that the neural sensitivity problem can be used to measure the propagation of a perturbation throughout NCG-trained sNODEs as well as SGD-trained ResNets with different activations and loss functions without running inference with noisy data. 

\textbf{The neural sensitivity problem can be used to generate powerful adversarial attacks.}
Given an input data point $\bm{x}^0 \in \mathbb{R}^N$, one infers its class by solving the sNODE~\eqref{eq:xDE} and determining which component of $\bm{x}(T)$ that has the greatest value. In adversarial attacks, an ``imperceptible error'' in the input leads to a significant change in the output. Such an attack can be found by an iterative process, which produces a chain of input datapoints $\bm{x}^1, \bm{x}^2, \ldots, \bm{x}^L$ in the vicinity of $\bm{x}^0$ so that a different component of the corresponding output would be the greatest for some of these inputs.

If a perturbation $\bm{\xi}^0 \in \mathbb{R}^N$ is small enough, then the input data $\bm{{\widetilde{x}}} = \bm{x}^0 + \bm{\xi}^0$ yields the output $\bm{{\widetilde{x}}}(T) \approx \bm{x}(T) + \bm{\xi}(T)$, where $\bm{\xi}$ is the solution to the corresponding input neural sensitivity problem~\eqref{eq:sp} with both $\omega(t)$ and $\beta(t)$ being identically zero (since neither the weights nor the biases are being perturbed here). 
Thus,~\eqref{eq:sp} becomes a homogeneous linear differential equation and the value of $\bm{\xi}(T)$ depends linearly on $\bm{\xi}^0$. In particular, there is a matrix $P \in \mathbb{R}^{N\times N}$ such that $\bm{\xi}(T) = P\bm{\xi}^0$.

Assume that $\bm{x}^0$ is inferred to belong to class $j$. For the sake of brevity, let $\bm{z} = \bm{x}(T)$ and $\bm{\zeta}=\bm{\xi}(T)$. The $i^{\text{th}}$ component of $\bm{{\widetilde{x}}}(T)$ will exceed the $j^{\text{th}}$ component whenever $\kappa ({z}_i + {\zeta}_i) \ge ({z}_j + {\zeta}_j)$, where the positive constant $\kappa \le 1$ determines a safety margin to compensate for a possible approximation error. Given this inequality, we seek $\bm{\xi}^0$ such that $P_{i,j} \bm{\xi}^0 = \bm{\zeta}_{i,j}$, where $P_{i,j}\in \mathbb{R}^{2\times N}$ is the submatrix of $P$ containing rows $i$ and $j$ of $P$ and $\bm{\zeta}_{i,j} = (\zeta_i, \zeta_j) \in\mathbb{R}^2$ consists of the components $i$ and $j$ of $\bm{\zeta}$. 
In the limiting case when $\kappa({z}_i + {\zeta}_i) = {z}_j + {\zeta}_j$, the least squares method yields that the least norm solution of the equation $P_{i,j} \bm{\xi}^0 = \bm{\zeta}_{i,j}$ is given by
\begin{equation}
	\begin{aligned}    
		\bm{\xi}^0 = P_{i,j}^{\mathsf{T}} (P_{i,j} P_{i,j}^{\mathsf{T}})^{-1} \bm{\zeta}_{i,j}, \quad
		\text{where} \quad {\zeta}_i & = \frac{\begin{pmatrix}1 & \kappa \end{pmatrix} (P_{i,j} P_{i,j}^{\mathsf{T}})^{-1} \begin{pmatrix}0 \\ {z}_j - \kappa {z}_i\end{pmatrix}}{\begin{pmatrix}1&\kappa\end{pmatrix} (P_{i,j} P_{i,j}^{\mathsf{T}})^{-1} \begin{pmatrix}1 \\ \kappa\end{pmatrix}}
		\\ \text{and}\quad {\zeta}_j &= \kappa {\zeta}_i + \kappa {z}_i - {z}_j.
	\end{aligned}
	\label{eq:xi0-attack}
\end{equation}
A candidate $\bm{x}^1$ for the perturbed input is then obtained by taking a short step (of predetermined length) from $\bm{x}^0$ in the direction of $\bm{\xi}^0$ given in~\eqref{eq:xi0-attack}. Should it happen that $\bm{x}^1$ ends up outside of the region of valid input values, then it is to be replaced by a nearest point within the valid region. If the classification label of $\bm{x}^1$ differs from $j$, then we have found an adversarial attack. Otherwise, we repeat the process by solving~\eqref{eq:xDE} with $\bm{x}^1$ as a starting point, then finding a new matrix $P$ for the solution of~\eqref{eq:sp} and a new direction of perturbation $\bm{\xi}^1$ as in~\eqref{eq:xi0-attack}, thereby generating another adversarial candidate $\bm{x}^2$. Additional iterations can be done as needed. A detailed algorithm to produce adversarial attacks by this method can be found in Appendix~\ref{app:adversarial}.

Our method exhibits a considerably higher success rate of adversarial attacks than state-of-the-art iterative attack methods, for most of the SGD-trained models, which can be seen in Figure~\ref{fig:advComp}~(left).~This comparison was done using the discrete ResNet-analogs of sNODEs for classification of downsampled MNIST, trained by SGD. We compare to the projected gradient descent (PGD) attack~\citep{madry2018towards}, which was most successful for our models, and in Appendix~\ref{app:adversarial} we include comparisons to FGSM~\citep{goodfellow2014explaining} and the method proposed by~\citet{carlini2017towards}. 
In all experiments we use the same step-length, $0.06$, and number of iterations, $15$.
Surprisingly, while the proposed method works better for the SGD-trained models, this does not apply to models trained with NCG, see Figure~\ref{fig:advComp} (middle and right). For examples of adversarial images, we refer to Appendix~\ref{app:adversarial}.

\textbf{NCG training and Sobolev gradient descent improve robustness against adversarial attacks.}
As can be seen in Figure~\ref{fig:advComp}, the NCG and Sobolev NCG sNODEs are significantly less susceptible to adversarial attacks compared to the SGD-trained networks. This goes for all the tested combinations of activation and loss functional. In general, including the Sobolev gradient in the NCG training further reduces the sensitivity to adversarial attacks, but not for all settings.
It has previously been demonstrated how NODEs compare favorably to conventional networks in terms of robustness to adversarial attacks~\citep{yan2019robustness,carrara2019robustness}, and our experiments confirm that this also apply to the standalone NODEs trained with NCG.

\begin{figure}[t]
	\begin{centering}
		\includegraphics[width=0.328\linewidth,trim=2mm 0 4mm 0, clip]{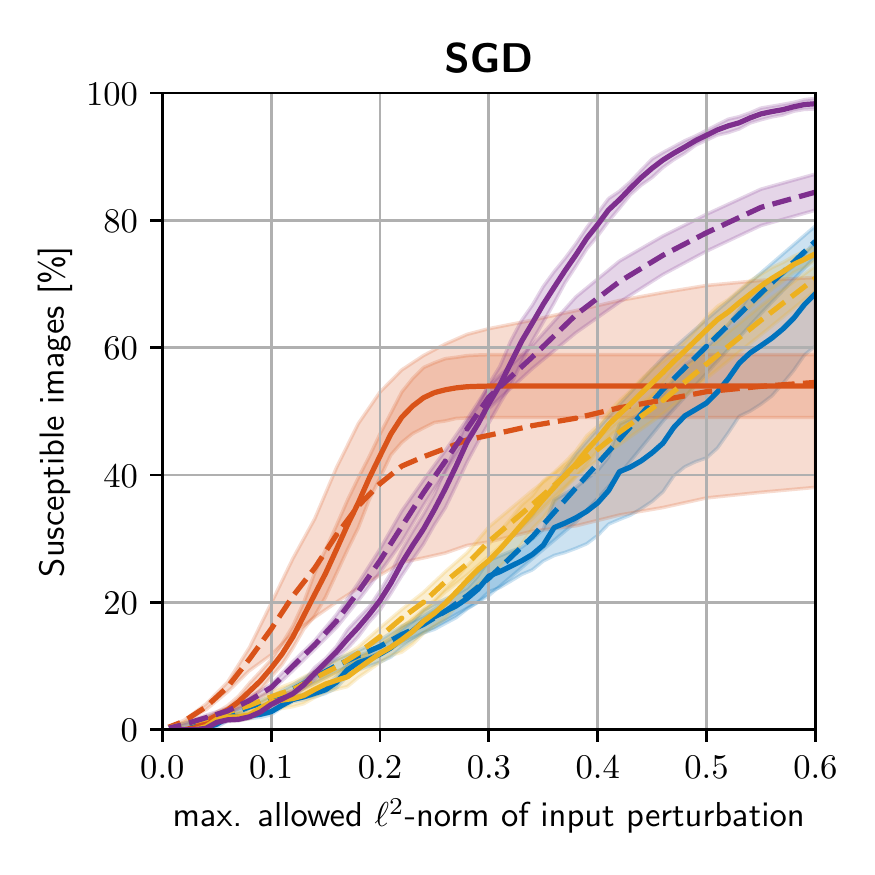} 
		\includegraphics[width=0.328\linewidth,trim=2mm 0 4mm 0, clip]{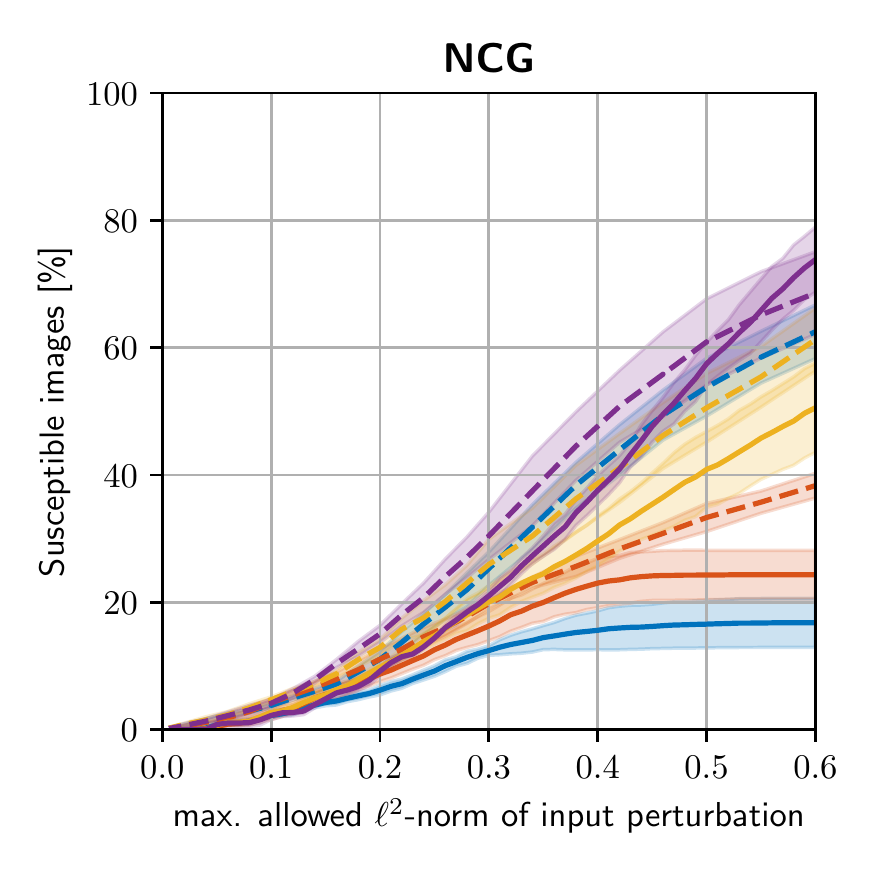} 
		\includegraphics[width=0.328\linewidth,trim=2mm 0 4mm 0, clip]{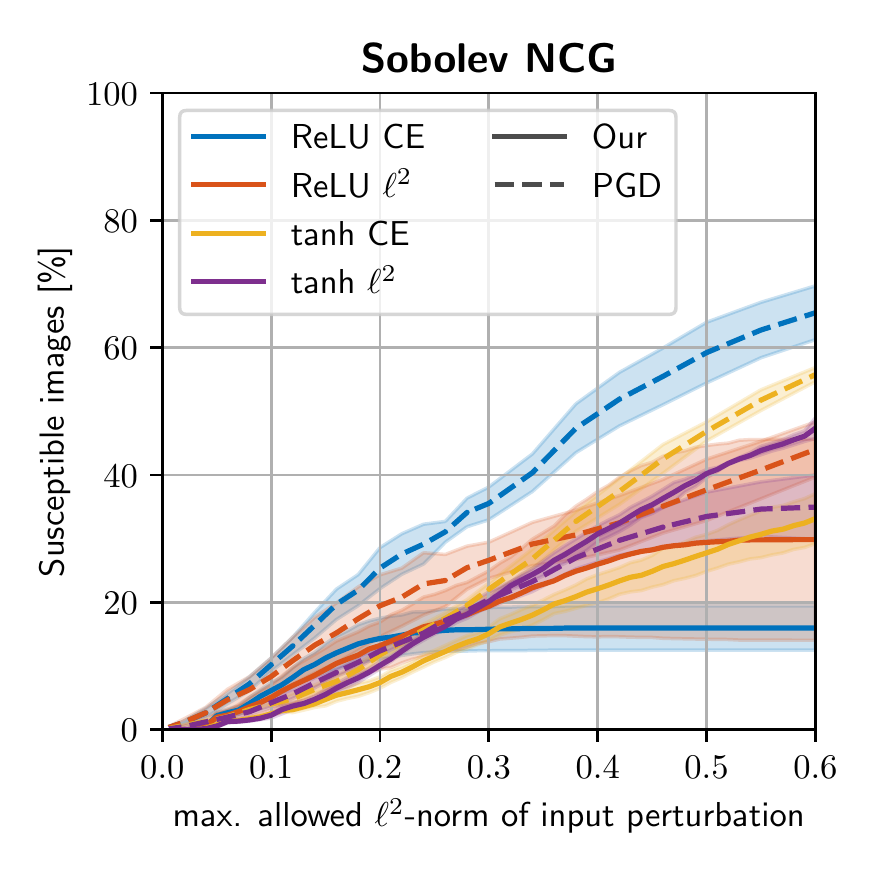} 
		\caption{Percentage of correctly classified images of test data set (MNIST downsampled to 14×14) that are susceptible to adversarial attacks. 
			On average, an input perturbation whose $\ell^2$-norm exceeds the threshold value of around $0.5$ is (subjectively) perceptible by the naked eye. The graphs show mean value and standard deviation over 5 test-runs for each training regime, separated between SGD (left), NCG (middle), and NCG with Sobolev gradient (right). The continuous line is our method, and dashed line is PGD~\citep{madry2018towards}. The legend (right) applies to all plots.}
		\label{fig:advComp}
	\end{centering}
	\vspace{-5mm}
\end{figure}

\section{Conclusion}
Novel sNODEs and a novel NCG are proposed and they compare well to SGD with a state-of-the-art optimizer, in the tested combinations of network and training setup (different activation function, loss function, and regularization). This is very promising, as this new setting has a great potential to further expand mathematical analysis of deep learning, from the previous embedded NODE blocks to an end-to-end network. In this work, we have focused on using this approach to perform sensitivity analysis of trained networks, and for generation of adversarial attacks. The experiments show that the solution to the neural sensitivity problem corresponds well to measured sensitivity and can be generalized to ResNet-models. Furthermore, the neural sensitivity problem can be used to generate adversarial examples. For SGD-trained models, these are more effective than previous methods at attacking networks trained on MNIST.
Another promising direction is to use Sobolev NCG for increased smoothness of the trained weights, and we have seen how this can have positive impact on robustness against adversarial attacks.

Although we can successfully optimize large sNODEs\footnote{The SGD network corresponding to the Euler discretized sNODE, with 150 layers and 196 neurons per layer, uses $\sim$5.8M trainable weights.}, with results on par or better than conventional SGD, there is a fundamental limitation in only considering fully connected layers. It is difficult in terms of memory and training time to scale the training to larger images. This is also the reason why we have only included results on the 14×14 pixels downsampled MNIST dataset, and with results that fall short of the state-of-the-art.

For future work, it is central to extend sNODEs beyond fully connected layers, e.g. by means of convolutional layers, and to allow for GPU acceleration. This will allow to fully exploit sNODEs on, e.g., large-scale computer vision problems. Other interesting research directions include sensitivity analysis of weight perturbations, and for uncertainty estimation. We can also see many promising possibilities in incorporating the adversarial sensitivity in the training, so as to further improve robustness against attacks. We believe that Standalone NODEs optimized by Sobolev NCG has the potential to bridge many gaps between deep learning and mathematics, promoting better theoretical foundation in future deep learning problems.

\section*{Acknowledgments}
This work was supported by the LiU Cancer network at Link\"{o}ping University, the research environment ELLIIT, and Link\"{o}ping University Center for Industrial Information Technology (CENIIT).

\newpage
\appendix
\section*{\huge{Appendices}}

\section{Mathematical background and proofs}\label{app:proofs}
In this section, we first give notations in~\ref{app:notation} then detailed proofs for the derivation of the neural sensitivity problem, the adjoint problem and the Sobolev descent direction in~\ref{app:sens}, \ref{app:adj} and \ref{app:sob} respectively.  

\subsection{Notations}\label{app:notation} 
The norms considered in this paper are denoted as follows:
\begin{itemize}
	\item $\| \bm{z} \|_{\ell^2}$ or simply $|\bm{z}|$ denotes the \emph{Euclidean norm} of the vector $\bm{z} \in \mathbb{R}^N$, i.e.,
	      \[ \| \bm{z} \|_{\ell^2} = \Bigl( \sum_{n=1}^N |z_n|^2 \Bigr)^{1/2}. \]
	\item $\| A \|_{F}$ denotes the \emph{Frobenius norm} of the matrix $A \in \mathbb{R}^{N\times N}$,
	      i.e.,
	      \[ \| A \|_{F} = \Bigl( \sum_{n,m=1}^N |A_{n,m}|^2 \Bigr)^{1/2}. \]
	\item $\| f \|_{L^2(I)}$ or simply $\| f \|_{L^2}$ is the \emph{$L^2$-norm} of a real-valued function $f: I \to \mathbb{R}$, i.e.,
	      \[ \| f \|_{L^2(I)} = \Bigl(\int_I |f(t)|^2 \,dt\Bigr)^{1/2}. \]
	      For a vector-valued mapping $b: I\to \mathbb{R}^N$ and a matrix-valued mapping $W: I\to \mathbb{R}^{N\times N}$, we define
	      \[
	      	\| b \|_{L^2} = \Bigl(\int_I \|b(t)\|_{\ell^2}^2 \,dt\Bigr)^{1/2}\quad \text{and} \quad
	      	\| W \|_{L^2} = \Bigl(\int_I \|W(t)\|_{F}^2 \,dt\Bigr)^{1/2}.
	      \]
	\item $\| f \|_{W^{1,2}(I)}$ is the first-order \emph{Sobolev norm} of a real-valued, vector-valued or matrix-valued mapping $f$, i.e.,
	      \[ \| f \|_{W^{1,2}(I)} = \bigl(\|f\|_{L^2(I)}^2 + \| f'\|_{L^2(I)}^2\bigr)^{1/2},\]
	      where $f'$ is the distributional derivative of $f$.
\end{itemize}
Observe that all the listed norms satisfy the parallelogram law and hence can be expressed using an inner product, namely $\| a \|^2 = \langle a, a \rangle$ provided that $a$ lies in the corresponding vector space. In particular, for functions $f, g: I \to \mathbb{R}$ we have
\begin{align*}
	\langle f, g \rangle_{L^2(I)} = \int_I f(t) g(t)\,dt \quad\text{and}\quad             
	\langle f, g \rangle_{W^{1,2}(I)} = \int_I \Bigl( f(t) g(t) + f'(t)g'(t) \Bigr) \,dt. 
\end{align*}
In case $f$ and $g$ are vector or matrix-valued, then $f(t)g(t)$ and $f'(t) g'(t)$ in the integrals above are to be replaced by the $\ell^2$ or Frobenius inner product of $f(t)$ and $g(t)$, and $f'(t)$ and $g'(t)$, respectively.
For the sake of simplicity, both $\langle \cdot , \cdot \rangle_{\ell^2}$ and $\langle \cdot , \cdot \rangle_{F}$ are denoted by $\langle \cdot , \cdot \rangle$ in Section~\ref{sec:ncg} of the paper, Section~\ref{app:adj} and Section~\ref{app:learn} of the appendix. 
\subsection{Neural sensitivity problem}\label{app:sens}
\begin{proof}[Proof of Lemma~\ref{lemmaSens}]
	Assuming that the element $\bm{x}(t)$, which is a solution to~(\ref{eq:xDE}), is perturbed by $\varepsilon \bm{\xi}(t)$ when $W(t)$ and $b(t)$ are perturbed by $\varepsilon \omega(t)$ and $\varepsilon \beta(t)$ respectively, with $\varepsilon>0$ being a small number, the corresponding perturbed problem is then given by
	\begin{equation}\label{eq:perturb}
		\left\{\begin{aligned}
		( \bm{x}(t) + \varepsilon \bm{\xi}(t) )' &= \sigma((W(t)+\varepsilon \omega(t)) (\bm{x}(t) + \varepsilon \bm{\xi}(t)) + (b(t)+\varepsilon \beta(t))),& t\in I,\\
		\bm{x}(0) + \varepsilon \bm{\xi}(0) & =  \bm{x}^{0} + \varepsilon \bm{\xi}^{0}.
		\end{aligned}\right.
	\end{equation}
	Subtracting problem~(\ref{eq:xDE}) from~\eqref{eq:perturb}, we obtain
	\begin{equation*}
		\left\{\begin{aligned}
		\varepsilon \bm{\xi}'(t) &= \sigma(W(t) \bm{x}(t)+ b(t) + \varepsilon (\omega(t) \bm{x}(t) + \beta(t) + W(t) \bm{\xi}(t) + \varepsilon \omega(t)))\\[0.1ex]
		&\qquad - \sigma(W(t)\bm{x}(t)+b(t)),& t\in I,\\[0.2ex]
		\varepsilon \bm{\xi}(0) & =  \varepsilon \bm{\xi}^{0} .
		\end{aligned}\right.
	\end{equation*}
	The neural sensitivity problem is then obtained by letting $\varepsilon \to 0$, and hence
	\begin{equation*}
		\left\{
		\begin{aligned}
			\bm{\xi}'(t) & = \sigma'\bigl(W(t) \bm{x}(t) + b(t)\bigr) \odot \bigl( \omega(t) \bm{x}(t) + \beta(t) + W(t) \bm{\xi}(t) \bigr), & t\in I, \\
			\bm{\xi}(0) & = \bm{\xi}^{0},
			\\ 
		\end{aligned}
		\right.  
	\end{equation*}
	where $\sigma'$ is the derivative of $\sigma$ applied componentwise and $\odot$ denotes the Hadamard (i.e., elementwise) product.
\end{proof}

\subsection{Adjoint Problem}\label{app:adj}
\begin{proof}[Proof of Lemma~\ref{lemmaAdj}]
	Let $\omega(t)$ and $\beta(t)$ be the directions of the perturbations of $W(t)$ and $b(t)$, respectively.
	The directional derivative of $\mathcal{E}(W,b)$ is given by
	\begin{align*}
		\mathcal{E}'_{(\omega,\beta)} & (W,b) = \lim_{\varepsilon \to 0} \frac{1}{\varepsilon}\bigl(\mathcal{E}(W+\varepsilon \omega, b+\varepsilon \beta) - \mathcal{E}(W,b)\bigr)                                                                                                                        \\
		                              & = \frac{1}{K} \sum_{k=1}^K \lim_{\varepsilon \to 0} \frac{1}{\varepsilon} \Biggl(\frac{\mu_1}{2}  \bigl(\| \bm{x}_k(T)+\varepsilon \bm{\xi}_k(T) - \bm{y}_k \|_{\ell^2}^2-\| \bm{x}_k(T) - \bm{y}_k \|_{\ell^2}^2\bigr)                                            \\
		                              & \qquad \qquad + \mu_2  \bigl(H\bigl(\bm{y}_k, \smax (\bm{x}_k(T)+\varepsilon \bm{\xi}_k(T))\bigr)-H\bigl(\bm{y}_k, \smax (\bm{x}_k(T))\bigr)\bigr)                                                                                                                 \\
		                              & \qquad \qquad + \frac{\mu_3}{2} \bigl(\| \bm{x}_k(T)+\varepsilon \bm{\xi}_k(T)  \|_{\ell^2}^2 - \| \bm{x}_k(T)  \|_{\ell^2}^2\bigr)\biggr)                                                                                                                         \\
		                              & \quad + \frac{\mu_4}{2} \int_0^T \lim_{\varepsilon \to 0} \frac{1}{\varepsilon}\bigl(  \|W+\varepsilon \omega\|^2_{F}-\|W\|^2_{F} + \|b+\varepsilon \beta\|^2_{\ell^2}- \|b\|^2_{\ell^2}\bigr) dt                                                                  \\
		                              & \quad + \frac{1}{K} \sum_{k=1}^K \int_{0}^{T} \bigl\langle \bm{\lambda}_{k}, \lim_{\varepsilon \to 0} \frac{1}{\varepsilon} \bigl( \sigma(W \bm{x}_k+b +\varepsilon^2 \omega\bm{\xi}_k+\varepsilon (W \bm{\xi}_k +\omega \bm{x}_k+\beta))                          \\
		                              & \qquad \qquad -\sigma(W \bm{x}_k +b) - ((\bm{x}_k+\varepsilon \bm{\xi}_k)'-\bm{x}_k') \bigr) \bigr\rangle\,dt                                                                                                                                                      \\ 
		\quad                         & =  \frac{1}{K} \sum_{k=1}^K \bigl\langle \bm{\xi}_k(T), \mu_1 (\bm{x}_k(T)-\bm{y}_k)+ \mu_2 (\smax (\bm{x}_k(T))-\bm{y}_k) + \mu_3 \bm{x}_k(T)\bigr\rangle                                                                                                         \\
		                              & \quad + \mu_4 \int_0^T (\bigl\langle \omega, W \bigr\rangle +\bigl\langle \beta, b \bigr\rangle)\,dt                                                                                                                                                               \\
		                              & \quad + \frac{1}{K} \sum_{k=1}^K \int_{0}^{T} \bigl\langle \bm{\lambda}_{k},\sigma'(W \bm{x}_k +b) \odot (W \bm{\xi}_k+\omega \bm{x}_k+\beta)\bigr\rangle\,dt - \frac{1}{K} \sum_{k=1}^K \int_{0}^{T} \bigl\langle \bm{\lambda}_{k}, \bm{\xi}_k'\bigr\rangle \,dt. 
	\end{align*}
	Integration by parts in the last integral, using $\bm{\xi}_k(0)=0$ and rearranging terms, give 
	\begin{align*}
		\mathcal{E}'_{(\omega,\beta)}(W,b) & =\frac{1}{K} \sum_{k=1}^K \bigl\langle \bm{\xi}_k(T), \mu_1 (\bm{x}_k(T)-\bm{y}_k)+ \mu_2 (\smax (\bm{x}_k(T))-\bm{y}_k) + \mu_3 \bm{x}_k(T)-\bm{\lambda}_k(T)\bigr\rangle \\ & \quad +  \int_{0}^{T} \bigl\langle \mu_4 W + \frac{1}{K} \sum_{k=1}^K \bm{x}_k^\T (\sigma'(W \bm{x}_k +b) \odot \bm{\lambda}_{k}),\omega \bigr\rangle\,dt \\ & \quad + \int_{0}^{T} \bigl\langle \mu_4 b + \frac{1}{K} \sum_{k=1}^K (\sigma'(W \bm{x}_k +b) \odot \bm{\lambda}_{k}),\beta \bigr\rangle\,dt \\ & \quad +\frac{1}{K} \sum_{k=1}^K \int_{0}^{T} \bigl\langle \bm{\lambda}_{k}' - W^\T \bigl( \sigma'\bigl( W \bm{x}_k + b \bigr) \odot \bm{\lambda}_k \bigr), \bm{\xi}_k\bigr\rangle \,dt.
	\end{align*}
	If the Lagrange multiplier $\bm{\lambda}_k$ is a solution to the \textit{adjoint problem} given by
	\[
		\left\{
		\begin{aligned}
			\bm{\lambda}_k'(t) & = - W(t)^\T \Bigl( \sigma'\bigl( W(t) \bm{x}_k(t) + b(t) \bigr) \odot \bm{\lambda}_k(t)\Bigr), \\
			\bm{\lambda}_k(T)  & = \mu_1(\bm{x}_k(T) - \bm{y}_k)+ \mu_2 (\smax(\bm{x}_k(T))-\bm{y}_k)+\mu_3 \bm{x}_k(T),        
		\end{aligned}
		\right.
	\]
	then we obtain that
	\begin{align*}
		\mathcal{E}'_{(\omega,\beta)}(W,b)  & =                                                                                                             
		\int_{0}^{T} \bigl\langle \mu_4 W + \frac{1}{K} \sum_{k=1}^K \bm{x}_k^\T (\sigma'(W \bm{x}_k +b) \odot \bm{\lambda}_{k}),\omega \bigr\rangle\,dt \\ & \quad +  \int_{0}^{T} \bigl\langle \mu_4 b + \frac{1}{K} \sum_{k=1}^K (\sigma'(W \bm{x}_k +b) \odot \bm{\lambda}_{k}),\beta \bigr\rangle\,dt.
	\end{align*}
	Hence, 
	\begin{align*}
		\mathcal{E}'_{\omega} & = \mu_4 W + \frac{1}{K} \sum_{k=1}^K \bm{x}_k^\T (\sigma'(W \bm{x}_k +b) \odot \bm{\lambda}_{k}) \quad\text{and} \\
		\mathcal{E}'_{\beta}  & = \mu_4 b + \frac{1}{K} \sum_{k=1}^K (\sigma'(W \bm{x}_k +b) \odot \bm{\lambda}_{k}). \qedhere                   
	\end{align*}
\end{proof}

\subsection{Sobolev descent direction}\label{app:sob}
\begin{proof}[Proof of Corollary~\ref{cor:SobDer}]
	Since every element of the dual space of $L^2(0, T)$ defines a continuous functional on the Sobolev class $W^{1,2}(0,T)$, the Riesz representation theorem for Hilbert spaces implies that there is a Sobolev function that represents the functional with respect to the inner product of $W^{1,2}(0, T)$. 
	We need to show that the transformation $\mathcal{S}$ as described in~(\ref{eq:S-xform}) yields such a representative, i.e.,
	\[ 
		\bigl< u, \phi \bigr>_{L^2(0,T)} = \bigl< \mathcal{S}u, \phi \bigr>_{W^{1,2}(0,T)}
	\]
	holds true for every $u \in L^2(0,T) \cong \bigl(L^2(0, T)\bigr)^*$ and every $\phi \in W^{1,2}(0, T)$. In other words, we want to show that
	\[
		\int_0^T u(t) \phi(t)\,dt = \int_0^T \bigl(\mathcal{S}u(t) \phi(t) + (\mathcal{S}u)'(t) \phi'(t)\bigr)\,dt.
	\]
	If $u \in \bigl(L^2(0,T)\bigr)^*$ is continuous, then a direct calculation shows that $\mathcal{S} u \in \mathcal{C}^2(0,T) \cap \mathcal{C}^{0,1}[0,T]$ is a solution to the boundary value problem
	\[
		\left\{
		\begin{aligned}
			(\mathcal{S} u )''(t) - \mathcal{S} u (t) & = -u(t), \quad t \in (0, T), \\
			(\mathcal{S} u )'(0)                      & = 0,                         \\
			(\mathcal{S} u )'(T)                      & = 0.                         
		\end{aligned}
		\right.
	\]
	Thus, we obtain by integration by parts that
	\begin{align*}
		\int_0^T & \bigl(\mathcal{S} u(t) \phi(t) + (\mathcal{S} u)'(t) \phi'(t)\bigr)\,dt                                                                \\
		         & = (\mathcal{S} u)'(T) \phi(T) - (\mathcal{S} u)'(0) \phi(0) - \int_0^T \bigl((\mathcal{S} u)''(t) - \mathcal{S} u(t)\bigr) \phi(t)\,dt \\
		         & = \int_0^T u(t) \phi(t)\,dt                                                                                                            
	\end{align*}
	for every $\phi \in W^{1,2}(0,T)$. Since continuous functions are dense in $L^2(0,T)$, we may apply density arguments to verify that $\bigl< u, \phi \bigr>_{L^2(0,T]} = \bigl< \mathcal{S}u, \phi \bigr>_{W^{1,2}(0,T)}$ is satisfied for a general, possibly discontinuous, $u \in \bigl(L^2(0, T)\bigr)^* \cong L^2(0,T)$.
\end{proof}
Formul\ae{}~(\ref{eq:E'W}) and~(\ref{eq:E'b}) give a representative of the Fréchet derivative of the functional $\mathcal{E}$ with respect to the $L^2$-inner product. We have shown above that the transformation $\mathcal{S}$ provides the corresponding representative with respect to the $W^{1,2}$-inner product.
\section{Nonlinear Conjugate Gradient method}\label{app:ncg}
In this section, we present the algorithm for the nonlinear conjugate gradient iterations for finding weights and biases, along with details in~\ref{app:learn} on how to compute an optimal learning rate via the neural sensitivity problem. We also show in~\ref{app:smooth} an example illustrating the smoothening effects of the Sobolev gradient descent direction.
\begin{algorithm}[H]
	\caption{Nonlinear Conjugate Gradient (NCG)}
	\label{alg:ncg}
	\begin{algorithmic}[1]
		{
			\renewcommand{\algorithmicrequire}{\textbf{Input:}}
			\renewcommand{\algorithmicensure}{\textbf{Output:}}
			\REQUIRE Batch of K randomly selected images from the training set with corresponding target classes
			\REQUIRE An activation function $\sigma$ and its derivative
			\REQUIRE Regularization coefficients $\mu_i$, where $i=1, \ldots, 4$
			\REQUIRE Initial weights and biases $W^0$ and $b^0$
			\FOR {$j=0,1,2\ldots$} 
			\STATE Solve the direct problem~(\ref{eq:xDE})
			\STATE Solve the adjoint problem~(\ref{eq:aDE2})
			\STATE Compute the gradient from~(\ref{eq:E'W}) and~(\ref{eq:E'b})
			\STATE [\emph{For Sobolev descent only:} Apply $\mathcal{S}$ from~(\ref{eq:S-xform}) componentwise to the gradient.]
			\IF{$j=0$}
			\STATE $\gamma^j = 0$
			\ELSE
			\STATE Compute Fletcher-Reeves conjugate gradient coefficient 
			\begin{equation*}
				\gamma^j = \frac{\| \mathcal{E}'_{\omega}(W^j,b^j) \|_{L^2}^2+\| \mathcal{E}'_{\beta}(W^j,b^j) \|_{L^2}^2}{\| \mathcal{E}'_{\omega}(W^{j-1},b^{j-1}) \|_{L^2}^2+\| \mathcal{E}'_{\beta}(W^{j-1},b^{j-1}) \|_{L^2}^2}   
			\end{equation*}
			\ENDIF
			\STATE Compute descent directions 
			\begin{align*}
				dW^{j} = -\mathcal{E}'_{\omega}(W^j,b^j) + \gamma^{j} \,dW^{j-1} \quad\text{and}\quad db^{j} =-\mathcal{E}'_{\beta}(W^j,b^j)  + \gamma^{j}\,db^{j-1} 
			\end{align*}
			\STATE Solve the neural sensitivity problem~(\ref{eq:sp}) with $\bm{\xi}_{k}^{j}(0)=0$, $\omega(t) = dW^{j}(t)$ and $\beta(t) = db^{j}(t)$
			\STATE Perform a line search to find the optimal learning rate \begin{equation*}
			\eta\,^{j}=\mathop{\arg\min}\limits_{\eta>0}\, E(W^{j}+\eta\, dW^{j}, b^{j}+\eta\, db^{j})
			\end{equation*}
			\STATE Update \begin{align*}
			\qquad	  W^{j+1} = W^{j} + \eta\,^{j}\,dW^{j} \quad\text{and}\quad b^{j+1} = b^{j} + \eta\,^{j}\,db^{j}
			\end{align*}
			\ENDFOR
		}
	\end{algorithmic} 
\end{algorithm}

\subsection{Computation of the NCG learning rate}\label{app:learn}
The optimal learning rate of the nonlinear conjugate gradient method is obtained from
\begin{equation*}
	\eta\,^{j}=\mathop{\arg\min}\limits_{\eta>0}\, E(W^{j}+\eta\, dW^{j}, b^{j}+\eta\, db^{j}).
\end{equation*}
If $\bm{\xi}^j_{k}$ is a solution to the neural sensitivity problem~(\ref{eq:sp}) with $\bm{\xi}_{k}^{j}(0)=0$, $\omega(t) = dW^{j}(t)$ and $\beta(t) = db^{j}(t)$, then
\begin{align*}
	E( & W^j+\eta\,dW^j, b^j+\eta\, db^j)                                                                                                  \\
	   & \approx  \frac{1}{K} \sum_{k=1}^K \Biggl(\frac{\mu_1}{2} \bigl\| \bm{x}_k(T)+\eta\, \bm{\xi}^j_k(T) - \bm{y}_k \bigr\|_{\ell^2}^2 
	+ \mu_2  H\bigl(\bm{y}_k, \smax (\bm{x}_k(T)+\eta\, \bm{\xi}^j_k(T))\bigr) \\
	   & \qquad \qquad\quad+ \frac{\mu_3}{2} \bigl\| \bm{x}_k(T)+\eta\, \bm{\xi}^j_k(T) \bigr\|_{\ell^2}^2 \Biggr)                         
	+ \frac{\mu_4}{2} \int_0^T \bigl(  \bigl\|W^j+\eta\, dW^j\bigr\|^2_{F} + \bigl\|b^j+\eta\, db^j\bigr\|^2_{\ell^2}\bigr) dt.
\end{align*}
Differentiating the right hand side with respect to $\eta$ yields
\begin{align*}
	  & \eta \biggr(  \frac{\mu_1 + \mu_3}{K} \sum_{k=1}^K \bigl\| \bm{\xi}^j_k(T) \bigr\|_{\ell^2}^2 + \mu_4 \bigl(\bigl\|dW^j\bigr\|_{L^2}^2 + \bigl\|db^j\bigr\|^2_{L^2}\bigr)\biggr) + \mu_2 \Bigl\langle \smax (\bm{x}_k(T)+\eta\, \bm{\xi}^j_k(T)),  \bm{\xi}^j_k(T)\Bigr\rangle \\
	  & \ \ + \frac{1}{K} \sum_{k=1}^K \bigl\langle \mu_1 (\bm{x}_k(T)-\bm{y}_k) - \mu_2 \bm{y}_k + \mu_3 \bm{x}_k(T), \bm{\xi}^j_k(T)\bigr\rangle + \mu_4 \int_{0}^{T} \bigl( \bigl\langle W^j, dW^j \bigr\rangle + \bigl\langle b^j, db^j \bigr\rangle \bigr)\,dt                    
\end{align*}
whose zero approximates the desired optimal learning rate $\eta^j$ for updating weights and biases. It is worth noting that this expression is linear in $\eta$ whenever $\mu_2 = 0$, i.e., whenever the cost functional does not contain cross-entropy loss.
\subsection{Smoothening effects of the Sobolev gradient descent direction}\label{app:smooth}
As can be seen in the example shown in Figure~\ref{fig:WBplot}, the steepest gradient descent direction with respect to the Sobolev $W^{1,2}$-inner product obtained by the transformation~(\ref{eq:S-xform}) results in smoother trained weights compared to weights obtained when considering the $L^2$-inner product for sNODEs and SGD for ResNets.
\begin{figure}[ht]
	\centering
	\includegraphics[width=\textwidth]{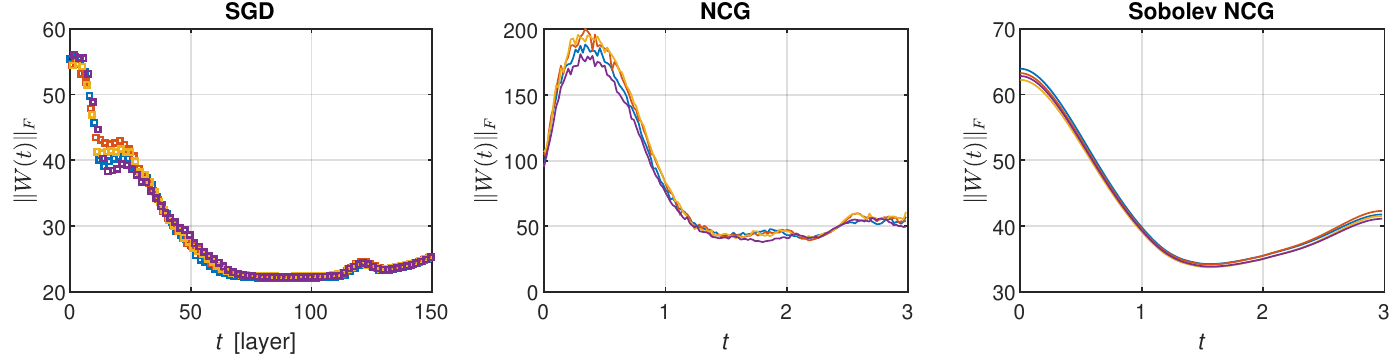}
	\caption{Comparison of the behavior of $\|W(t)\|_F$ obtained by training sNODEs for classification of downsampled (14×14) MNIST with $\ell^2$-loss and $\tanh$ as an activation function using different training methods. The plots show the norms of $W(t)$ for 4 testruns. Note the smoothness of the weights when Sobolev gradient descent was used during the training.}
	\label{fig:WBplot}
\end{figure}

\section{Evaluation of the neural sensitivity problem}\label{app:sensitivity}
In this section, we extend the evaluation of the neural sensitivity problem to the CE-loss setting: the estimated perturbation $\lVert \bm{\xi}(t) \rVert / \lVert \bm{x}(t) \rVert$ and the measured perturbation $\lVert \bm{\delta}(t) \rVert / \lVert \bm{x}(t) \rVert$ throughout the SGD-trained ResNets and the NCG-trained sNODEs with CE-loss are presented in Figures~\ref{fig:sp_xi_delta_ce_sgd} and ~\ref{fig:sp_xi_delta_ce_ncg}, respectively.  

\begin{figure}[H]
	\begin{centering}
		\begin{subfigure}[b]{0.43\textwidth}
			\centering
			\includegraphics[width=\textwidth]{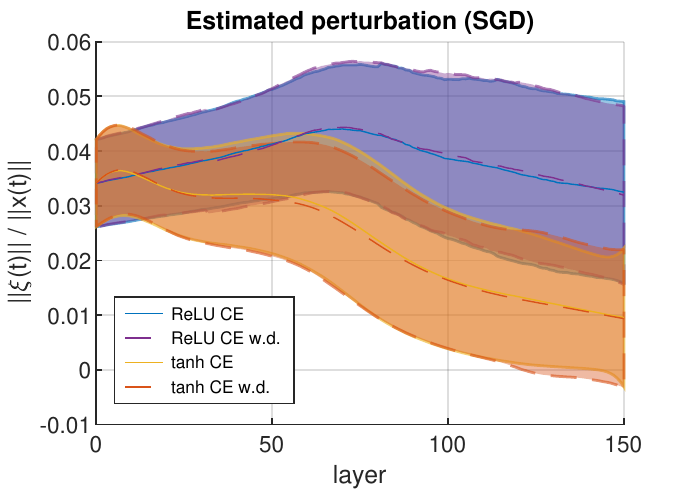}
		\end{subfigure}
		\begin{subfigure}[b]{0.43\textwidth}
			\centering
			\includegraphics[width=\textwidth]{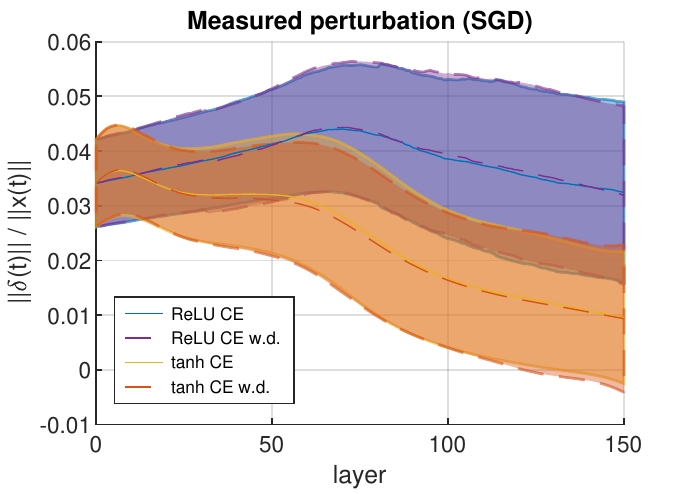}
		\end{subfigure}
		\caption{Estimated propagation of the perturbation (left panel) compared to the measured perturbation (right panel) when $\std(\bm{\xi}^0)=\std(\bm{\delta}^0)=0.01$ through different SGD-trained ResNets using CE-loss.}
		\label{fig:sp_xi_delta_ce_sgd}
	\end{centering}
\end{figure}

\begin{figure}[H]
	\begin{centering}
		\begin{subfigure}[b]{0.43\textwidth}
			\centering
			\includegraphics[width=\textwidth]{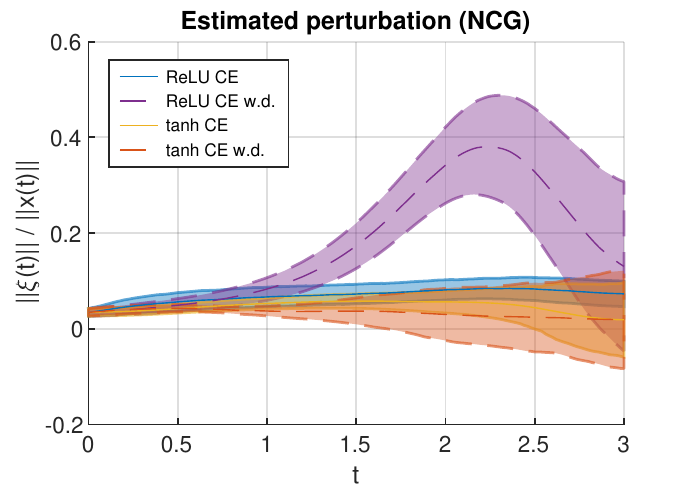}
		\end{subfigure}
		\begin{subfigure}[b]{0.43\textwidth}
			\centering
			\includegraphics[width=\textwidth]{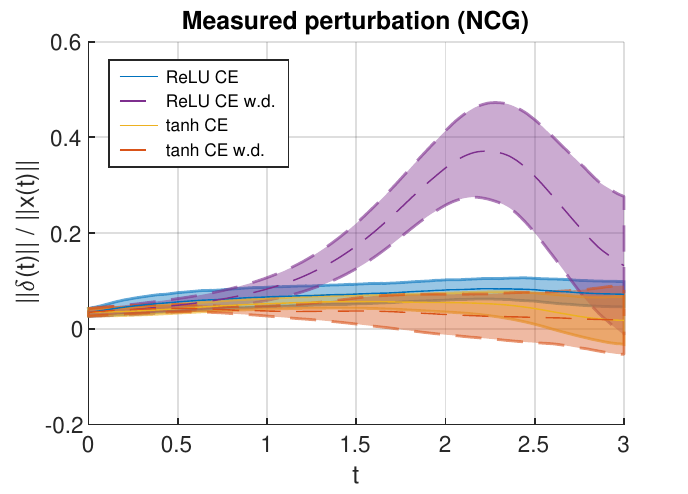}
		\end{subfigure}
		\caption{Estimated propagation of the perturbation (left panel) compared to the measured perturbation (right panel) when $\std(\bm{\xi}^0)=\std(\bm{\delta}^0)=0.01$ through different NCG-trained sNODEs using CE-loss. } 
		\label{fig:sp_xi_delta_ce_ncg}
	\end{centering}
\end{figure}

\vfill
We further illustrate how the general sensitivity problem provides a good approximate description of noise propagation by reporting relative estimation errors $\lVert \bm{\xi}(t)-\bm{\delta}(t) \rVert / \lVert \bm{\xi}(t) \rVert$ for CE-loss and $\ell^2$-loss in Figures~\ref{fig:sp_sgd_err_ce} and~\ref{fig:sp_sgd_err_app}, respectively. 

\vfill
\begin{figure}[H]
	\includegraphics[width=0.49\linewidth]{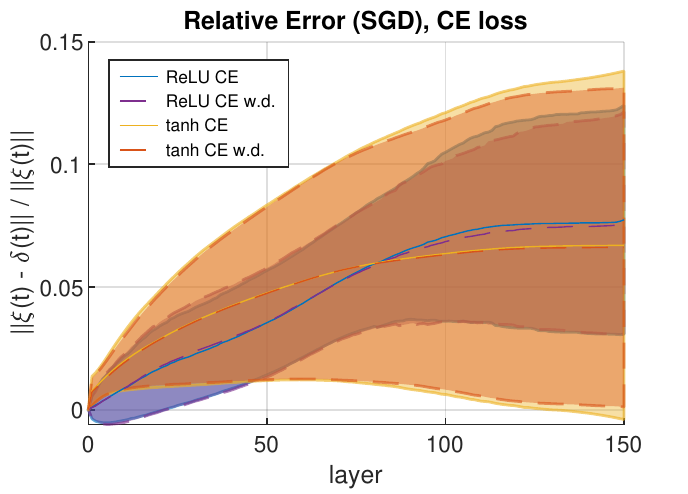} \hfill
	\includegraphics[width=0.49\linewidth]{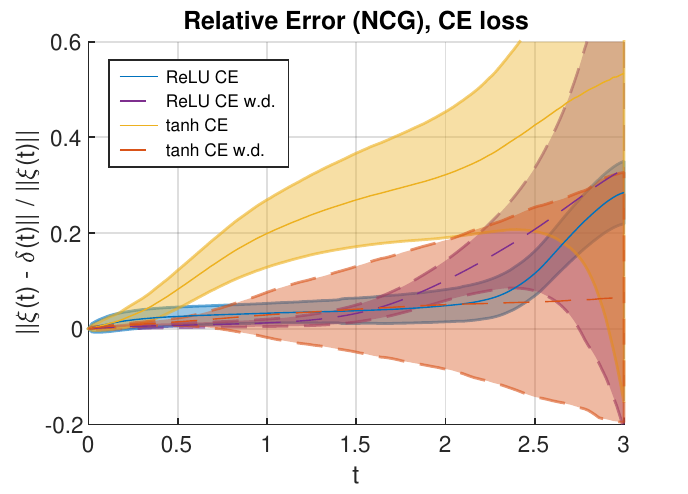}
	\caption{Relative error between estimated and measured perturbations for SGD-trained ResNets (left panel) and NCG-trained sNODEs (right panel) as $\std(\bm{\xi}^0)=\std(\bm{\delta}^0)=0.01$, where CE-loss was used during the training.}
	\label{fig:sp_sgd_err_ce}
\end{figure}

\vfill
All graphs in Figures~\ref{fig:sp_xi_delta_l2},~\ref{fig:sp_xi_delta_ce_sgd},~\ref{fig:sp_xi_delta_ce_ncg},~\ref{fig:sp_sgd_err_ce} and~\ref{fig:sp_sgd_err_app} show mean value and standard deviation over 5 test-runs for each training regime with the dashed line corresponding to settings where weight decay is considered. 

\vfill
\begin{figure}[H]
	\includegraphics[width=0.49\linewidth]{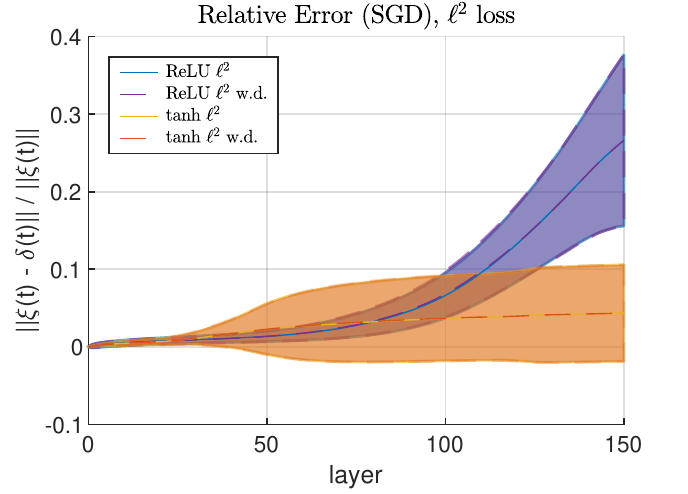} \hfill
	\includegraphics[width=0.49\linewidth]{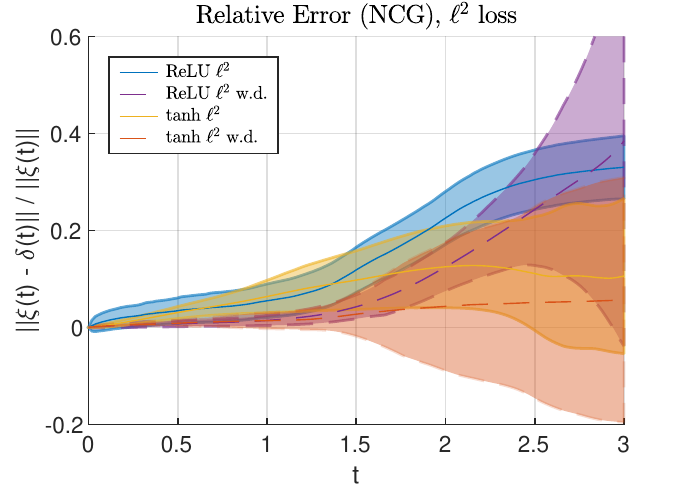}
	\caption{Relative error between estimated and measured perturbations for SGD-trained ResNets (left panel) and NCG-trained sNODEs (right panel) as $\std(\bm{\xi}^0)=\std(\bm{\delta}^0)=0.01$, where $\ell^2$-loss was used during the training, cf.~Figure~\ref{fig:sp_xi_delta_l2}}
	\label{fig:sp_sgd_err_app}
\end{figure}
We note from comparing Figure~\ref{fig:sp_xi_delta_l2} with Figures~\ref{fig:sp_xi_delta_ce_sgd}--~\ref{fig:sp_xi_delta_ce_ncg} that in the CE-setting, similarly to the $\ell^2$-loss setting, perturbations are more controlled when weight decay is applied for NCG-trained sNODEs with tanh activation functions whereas for SGD-trained ResNets, weight decay has no effect. We also report that considering CE-loss yields less sensitive SGD-trained ResNets with ReLU activation functions while it has no significant effect on the sensitivity of NCG-trained sNODEs.

\subsection*{Effect of Gaussian noise perturbations}
We display in Figure~\ref{fig:noisyImg} an illustration of the effects of Gaussian noise for several of the values of the standard deviation used to study robustness of different models and training settings, see Figure~\ref{fig:all_acc_nois}.  
\begin{figure}[h]
	\centering
	\includegraphics[width=\textwidth]{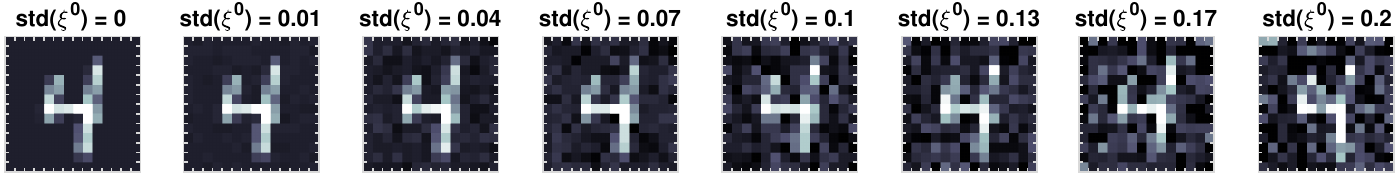}
	\caption{Illustration of effects of the Gaussian noise on an image from the downsampled (14×14) MNIST data set for several values of the standard deviation.}
	\label{fig:noisyImg}
\end{figure}

\section{Adversarial attacks and robustness}\label{app:adversarial}
In this section, we describe the main idea behind generating adversarial attacks via the neural sensitivity problem~(\ref{eq:sp}). First, we derive the mathematical tools for targeted and untargeted attacks and then we summarize these in form of pseudocode, see Algorithm~\ref{alg:advAttack} in Subsection~\ref{sec:attackSum}.

\subsection{Preliminaries}
In order to infer the class of a given input data point $\bm{x}^0 \in \mathbb{R}^N$, one solves the sNODE direct problem~(\ref{eq:xDE}) and determines $\arg\max$ of the output $\bm{x}(T)$. Provided that the input perturbation $\bm{\xi}^0 \in \mathbb{R}^N$ is sufficiently small, the perturbed input data $\bm{{\widetilde{x}}}^0 = \bm{x}^0 + \bm{\xi}^0$ yields the output $\bm{{\widetilde{x}}}(T) \approx \bm{x}(T) + \bm{\xi}(T)$, where $\bm{\xi}$ is the solution to the corresponding input neural sensitivity problem~(\ref{eq:sp}) with both $\omega(t)$ and $\beta(t)$ being identically zero, i.e.,
\begin{equation}
	\left\{
	\begin{aligned}
		\bm{\xi}'(t) & = \sigma'\bigl(W(t) \bm{x}(t) + b(t)\bigr) \odot W(t) \bm{\xi}(t), & t\in I, \\
		\bm{\xi}(0) & = \bm{\xi}^0.
	\end{aligned}
	\right.  
	\label{eq:spAdv}
\end{equation} 
This is a homogeneous linear differential equation and the value of $\bm{\xi}(T)$ depends linearly on $\bm{\xi}^0$. Thus, there is a matrix $P \in \mathbb{R}^{N\times N}$ such that $\bm{\xi}(T) = P\bm{\xi}^0$. In the next paragraph, we describe how to find this matrix $P$.

Assume that $\{\bm{e}_1, \bm{e}_2, \ldots, \bm{e}_N\}$ is the canonical orthonormal basis in $\mathbb{R}^N$. Then, the $n$\textsuperscript{th} column of $P$ is equal to $\bm{\xi}(T)\in \mathbb{R}^{N}$ obtained by solving \eqref{eq:spAdv} with the initial condition $\bm{\xi}(0) = \bm{e}_n$.
Instead of solving the sensitivity problem \eqref{eq:spAdv} for each of the basis vectors separately, one can compute the entire matrix $P$ at once by solving this ODE-system for the initial condition $\bm{\xi}(0)$ set equal to the  $N \times N$-identity matrix. Then, $P = \bm{\xi}(T) \in \mathbb{R}^{N\times N}$.

Note that the matrix $P$ can be interpreted as the Jacobian matrix for the derivative of the network output $\bm{x}(T)$ with respect to the network input $\bm{x}^0$. It is therefore possible to replace the usage of the sensitivity problem \eqref{eq:spAdv} by instead differentiating the network output with respect to the network input as in \citep{novak2018sensitivity}. However, the sensitivity problem would be preferable due to the computational cost, taking into account the number of nonlinear operations needed to find $P$. 
\subsection{Targeted attack}
\label{ssec:tgtAttacks}
Here, we assume that the neural network classifies the input $\bm{x}^0$ as class $j$. We search for a (small) perturbation $\bm{\xi}^0$ such that the perturbed input $\widetilde{\bm{x}}^0 := \bm{x}^0 + \bm{\xi}^0$ would be classified as class $i$, where $i \neq j$.

Since inference is based on the value of $\arg\max$ of the output layer, we need to focus on components of the output for the original data $\bm{x}(T)$ and for the perturbed data $\widetilde{\bm{x}}(T) \approx \bm{x}(T) + \bm{\xi}(T)$. Let us therefore define $\bm{z} = \bm{x}(T)$ and $\bm{\zeta}=\bm{\xi}(T)$.

The $i$\textsuperscript{th} component of $\bm{{\widetilde{x}}}(T) \approx \bm{z} + \bm{\zeta}$ will exceed its $j$\textsuperscript{th} component whenever $\kappa ({z}_i + {\zeta}_i) \ge ({z}_j + {\zeta}_j)$, where the constant $\kappa \in (0, 1]$ determines a safety margin. Thus, we seek an input perturbation $\bm{\xi}^0$ having a small norm and such that $P\bm{\xi}^0 = \bm{\zeta}$ has the $i$\textsuperscript{th} and $j$\textsuperscript{th} components which satisfy the inequality
\begin{equation}
	\kappa ({z}_i + {\zeta}_i) \ge {z}_j + {\zeta}_j.
	\label{eq:zetazeta}
\end{equation}
Since this inequality involves only two components of $\bm{\zeta}$, we may restrict our further computations to these. Hence, let $P_{i,j}\in \mathbb{R}^{2\times N}$ be the submatrix of $P$ containing its $i$\textsuperscript{th} and $j$\textsuperscript{th} rows and let $\bm{\zeta}_{i,j} = (\zeta_i, \zeta_j)^{\T} \in\mathbb{R}^2$ consist of the $i$\textsuperscript{th} and $j$\textsuperscript{th} components of $\bm{\zeta}$. Then, $P_{i,j}\bm{\xi}^0 = \bm{\zeta}_{i,j}$.

\paragraph{Suppose for the time being that $\rank P_{i,j} = 2$.} In case $\bm{\zeta}_{i,j}$ is given, then the system of equations $P_{i,j}\bm{\xi}^0 = \bm{\zeta}_{i,j}$ is underdetermined and its solutions form an $(N-2)$-dimensional convex set. The \textit{least-norm solution} is given by $\bm{\xi}^0 = P_{i,j}^\T (P_{i,j} P_{i,j}^\T)^{-1} \bm{\zeta}_{i,j}$. Note that the matrix $P_{i,j} P_{i,j}^\T$ is invertible as long as $\rank P_{i,j} = 2$ since
\[
	P_{i,j} P_{i,j}^\T = \begin{pmatrix} |P_i|^2 & \langle P_i, P_j\rangle \\ \langle P_i, P_j\rangle & |P_j|^2\end{pmatrix}\quad\text{and}\quad (P_{i,j} P_{i,j}^\T)^{-1} = \frac{\tilde{P}_{j,i}\tilde{P}_{j,i}^\T}{|P_i|^2 |P_j|^2 - \langle P_i, P_j\rangle^2} =: \frac{\tilde{P}_{j,i}\tilde{P}_{j,i}^\T}{\Delta}
\]
as $|\langle P_i, P_j\rangle| < |P_i|\,|P_j|$ by the Cauchy--Schwarz inequality, where $P_i$ and $P_j$ denote the $i$\textsuperscript{th} and $j$\textsuperscript{th} row of $P$, respectively, and $\tilde{P}_{j,i} = \begin{pmatrix}  -P_j\\  P_i  \end{pmatrix}$.

We want to minimize the norm of the perturbation $\bm{\xi}^0 = P_{i,j}^\T (P_{i,j} P_{i,j}^\T)^{-1} \bm{\zeta}_{i,j}$ subject to \eqref{eq:zetazeta}. Then, $| \bm{\xi}^0|^2 = |\tilde{P}_{j,i}^\T \bm{\zeta}_{i,j}|^2/\Delta$ is a positive definite quadratic form with respect to the variable $\bm{\zeta}_{i,j} = (\zeta_i, \zeta_j)^\T$. Therefore, the minimum of $|\bm{\xi}^0|$ subject to \eqref{eq:zetazeta} is attained at the boundary of this region, where  $\zeta_j = \kappa({z}_i + {\zeta}_i) - z_j$. In other words, we shall minimize
\[
	| \bm{\xi}^0 |^2 = \frac{\left|\bigl(\kappa({z}_i + {\zeta}_i) - z_j\bigr) P_i^\T -\zeta_i P_j^\T \right|^2}{\Delta} = \frac{\left| {\zeta}_i \bigl(\kappa P_i^\T - P_j^\T\bigr) + \bigl(\kappa {z}_i - z_j\bigr) P_i^\T\right|^2}{\Delta},
\]
where $\zeta_i \in \mathbb{R}$ is the only free variable. This quantity is a second degree polynomial in $\zeta_i$ whose minimum is easy to find by, e.g., differentiation w.r.t.~$\zeta_i$. Thus, $| \bm{\xi}^0 |^2$ is minimized when
\[
	{\zeta}_i = \frac{\begin{pmatrix}1& \kappa \end{pmatrix} \tilde{P}_{j,i} \tilde{P}_{j,i}^{\mathsf{T}} \begin{pmatrix}0 \\ {z}_j - \kappa {z}_i\end{pmatrix}}{\begin{pmatrix}1&\kappa\end{pmatrix} \tilde{P}_{j,i} \tilde{P}_{j,i}^{\mathsf{T}} \begin{pmatrix}1 \\ \kappa\end{pmatrix} }.
\]
Plugging in this expression for $\zeta_i$ into the identity $\zeta_j = \kappa({z}_i + {\zeta}_i) - z_j$, we obtain by a direct calculation that the minimum occurs for $\bm{\zeta}_{i,j} = (\zeta_i, \zeta_j)^{\T}$ with
\[
	\bm{\zeta}_{i,j} = \frac{z_j - \kappa\,z_i}{\begin{pmatrix}1&\kappa\end{pmatrix} \tilde{P}_{j,i} \tilde{P}_{j,i}^{\mathsf{T}} \begin{pmatrix}1 \\ \kappa\end{pmatrix}} \, \begin{pmatrix} 0& 1\\ -1& 0\end{pmatrix}  \tilde{P}_{j,i} \tilde{P}_{j,i}^{\mathsf{T}} \begin{pmatrix}1 \\ \kappa\end{pmatrix}\,,
\]
which in turn allows us to compute and simplify
\begin{equation}
	\bm{\xi^0} = \frac{P_{i,j}^\T (\tilde{P}_{j,i} \tilde{P}_{j,i}^\T) \bm{\zeta}_{i,j} }{ \Delta} = \cdots = \frac{(z_j - \kappa z_i) (\kappa P_i^\T - P_j^\T)}{\begin{pmatrix}1&\kappa\end{pmatrix} \tilde{P}_{j,i} \tilde{P}_{j,i}^{\mathsf{T}} \begin{pmatrix}1 \\ \kappa\end{pmatrix}} = \frac{(z_j - \kappa z_i) (\kappa P_i^\T - P_j^\T)}{|\kappa P_i^\T - P_j^\T|^2}\,.
	\label{eq:minxi0}
\end{equation}
Note also that
\begin{equation}
	\label{eq:minxi0norm}    
	|\bm{\xi^0}| = \frac{z_j - \kappa z_i}{|\kappa P_i^\T - P_j^\T|}\,.
\end{equation}

\paragraph{Let us now consider the case when $\rank P_{i,j} = 1$.} We will show that even in such a case, \eqref{eq:minxi0} yields an input perturbation of least norm such that \eqref{eq:zetazeta} is satisfied, provided that $\kappa P_i \neq P_j$.

Assume first that $P_i \neq 0$ and $P_j = \lambda P_i$ for some $\lambda \in \mathbb{R}$. Similarly as before, only the limiting case when $\zeta_j = \kappa({z}_i + {\zeta}_i) - z_j$ is to be considered. The equation $P_{i,j} \bm{\xi}^0 = \bm{\zeta}_{i,j}$ becomes
\[
	\langle P_i, \bm{\xi}^0 \rangle \begin{pmatrix}1\\ \lambda \end{pmatrix}
	= \begin{pmatrix}0\\ \kappa z_i - z_j \end{pmatrix} + \zeta_i \begin{pmatrix}1\\ \kappa \end{pmatrix}, \quad \text{hence}\quad \zeta_i = \langle P_i, \bm{\xi}^0 \rangle = \frac{\kappa z_i - z_j}{\lambda - \kappa}.
\]
It follows from the Cauchy--Schwarz inequality that $\bm{\xi}^0 = \displaystyle \frac{\kappa z_i - z_j}{\lambda - \kappa} \, \frac{P_i^\T}{|P_i|^2}$ has the least norm among all solutions to $\langle P_i, \bm{\xi}^0 \rangle = (\kappa z_i - z_j)/(\lambda - \kappa)$. This particular $\bm{\xi}^0$ is also obtained by~\eqref{eq:minxi0} when one substitutes $P_j = \lambda P_i$.
Observe though that the equation $P_{i,j} \bm{\xi}^0 = \bm{\zeta}_{i,j}$ lacks any solutions if $\lambda = \kappa$. Should this occur, one might want to consider a different safety margin $\kappa \in (0, 1]$.

Assume then that $P_i = 0$ while $P_j \neq 0$. Then, the equation $P_{i,j} \bm{\xi}^0 = \bm{\zeta}_{i,j}$ becomes
\[
	\langle P_j, \bm{\xi}^0 \rangle \begin{pmatrix}0\\ 1 \end{pmatrix}
	= \begin{pmatrix}0\\ \kappa z_i - z_j \end{pmatrix} + \zeta_i \begin{pmatrix}1\\ \kappa \end{pmatrix}, \quad \text{hence}\quad \zeta_i = 0 \quad \text{and}\quad \langle P_j, \bm{\xi}^0 \rangle = \kappa z_i - z_j.
\]
Similarly as before, Cauchy--Schwarz inequality yields that $\bm{\xi}^0 = \displaystyle \frac{(\kappa z_i - z_j) \,P_j^\T}{|P_j|^2}$ has the least norm among all solutions. This particular $\bm{\xi}^0$ also corresponds to~\eqref{eq:minxi0} when $P_i = 0 \neq P_j$.

\paragraph{Finally, if $\rank P_{i,j}=0$,}{\hskip-0.5em} then our method does not produce any input perturbation $\bm{\xi}^0$ that would push the neural network output away from class $j$ towards class $i$.

\paragraph{When the input perturbation $\bm{\xi}^0$ has been found,}{\hskip-0.5em} it may happen that one cannot immediately let $\widetilde{\bm{x}}^0 = \bm{x}^0 + \bm{\xi}^0$. One possible issue is that the perturbed input $\widetilde{\bm{x}}^0$ has values that are invalid. For example, when classifying images with pixel values normalized to lie in the interval $[0, 1]$, it can happen that $\widetilde{\bm{x}}^0$ would contain pixels with a negative value or a value exceeding $1$. Another possible issue stems from the approximation error in $\widetilde{\bm{x}}(T) \approx \bm{x}(T) + \bm{\xi}(T)$, which may be too large in case the magnitude of $\bm{\xi}^0$ is too large.

To prevent these issues, the adversarial attacks could be generated in an iterative process, where the clean input $\bm{x}^0$ is perturbed by taking only a short step in the direction of $\bm{\xi}^0$ and thereby producing $\bm{x}^1$ in the vicinity of $\bm{x}^0$. Should $\bm{x}^1$ lie outside of the region with valid values, then it would be replaced by a nearest point in the valid region. Since the inferred class of $\bm{x}^1$ may coincide with the class of $\bm{x}^0$, one applies the method above to find a perturbation $\bm{\xi^1}$ for $\bm{x}^1$ and take only a short step from there in the direction of $\bm{\xi}^1$, thereby generating $\bm{x}^2$. The process is iterated until a misclassification occurs or until a maximum number of allowed iterations is reached.
\subsection{Untargeted attack via targeted attack with a moving target}
Similarly as above, we assume that the neural network classifies the input $\bm{x}^0$ as class $j$. We search for a (small) perturbation such that the perturbed input would be classified differently, without any particular preference as to which class it would be.

Similarly as above, this attack is generated by an iterative process, where we determine in each iteration separately which class $i$ is to be preferred. The value of $i$ is chosen so that it minimizes the norm of $\bm{\xi}^0$ in \eqref{eq:minxi0norm}. Observe that if $z_j \le \kappa z_i$, then we have already found an adversarial attack and the iteration may be terminated. Thus, only a minor modification of the algorithm suffices to ensure that the target is updated in each iteration.

\subsection{Summary and results}
\label{sec:attackSum}
For targeted attacks as described in Subsection~\ref{ssec:tgtAttacks}, an input perturbation is sought so that class $i$ is given preference over class $j$. Note however that the method lacks any control over other remaining classes. It is therefore very well possible that another class would overpower both classes $i$ and $j$ in the network output.

\begin{algorithm}[ht]
	\caption{Adversarial attacks, both targeted and untargeted (via moving target)}
	\label{alg:advAttack}
	\begin{algorithmic}[1]
		\renewcommand{\algorithmicrequire}{\textbf{Input:}}
		\renewcommand{\algorithmicensure}{\textbf{Output:}}
		\REQUIRE An sNODE neural network (the algorithm can be easily adapted to ResNets though)
		\REQUIRE An input datapoint $\bm{x}^0$ whose inferred class is $j$.
		\REQUIRE [\emph{For targeted attacks only:} Target class $i$, which is to be given preference.]
		\REQUIRE Safety margin $\kappa \in (0, 1]$, step length $s>0$ and max.~number of iterations $M>0$
		\ENSURE A point $\bm{x}$ in the vicinity of $\bm{x}^0$ whose inferred class differs from $j$
		\STATE Solve the direct problem~\ref{eq:xDE}) with $\bm{x}(0) = \bm{x}^0$.
		\FOR {$m=0,1,2\ldots, M-1$} 
		\STATE Solve the neural sensitivity problem~\eqref{eq:spAdv} with $\bm{\xi}(0) = I_{N\times N}$.
		\STATE Let $P = \bm{\xi}(T)$.
		\STATE [\emph{For untargeted attacks only:} Find $i$ that minimizes \eqref{eq:minxi0norm}.]
		\STATE Compute $\bm{\xi}^{m}$ as in \eqref{eq:xi0-attack}.
		\STATE Let $\bm{\tilde{x}}^{m+1} = \bm{x}^m + s\,\bm{\xi}^{m} / |\bm{\xi}^{m}|$, and $\bm{x}^{m+1}$ be the point in the valid region nearest to $\bm{\tilde{x}}^{m+1}$
		\STATE Solve~\ref{eq:xDE}) with $\bm{x}(0) = \bm{x}^{m+1}$.
		\IF{$\arg\max \bm{x}(T) \neq j$}
		\RETURN $\bm{x}^{m+1}$
		\ENDIF
		\ENDFOR
	\end{algorithmic} 
\end{algorithm}

We compare the proposed attack method to projected gradient descent (PGD)~\citep{madry2018towards}, the fast gradient signed method (FGSM)~\citep{goodfellow2014explaining}, and the method proposed by \cite{carlini2017towards} which we refer to as CW. Our method is tuned to run $15$ iterations with a step length of $0.06$. We compare to the same settings with PGD, but also note that this method can benefit from larger step length. Thus, we also compare to PGD with step length $0.6$.

Figure~\ref{fig:adv_pgd} shows comparisons in terms of success rate of adversarial attacks for different amount of allowed perturbation of the input images. Comparisons have been performed on the test set of the downsampled MNIST (14x14) dataset, for various combinations of training method (SGD, NCG, Sobolev NCG), activation function (ReLU, tanh) and loss functions (CE, $\ell^2$). From the results, we can see that the adversarial attack with a moving target has a success rate on par with the attack where each image is subjected to nine different targeted attacks, yet it runs approximately nine times faster. Compared to PGD with the same settings, our method is consistently more successful. However, PGD with a larger step length outperforms our method for most NCG trained models.
For CW, it is difficult to control the maximum norm of adversarial perturbations. For our models, we also notice how this method uses large perturbations for the successful attacks, which means that the perturbation to success ratio is substantially larger compared to our method and PGD, see Table~\ref{tab:cw}.

In our experiments, we have found an image in the downsampled MNIST dataset that is susceptible to our adversarial attacks (both targeted and untargeted) as well as PGD attacks, in ResNet/sNODE neural networks produced by various training methods (SGD, NCG, and Sobolev NCG) so that the input perturbations have a similar $\ell^2$-norm in all these cases. In Figure~\ref{fig:5underAttack}, we present the original image as well as its misclassified attacked versions with the corresponding adversarial perturbations. The training regime used to find weights of the neural network has evidently a significant effect on the form of the adversarial perturbation.

Some further example images of adversarial attacks are demonstrated in Figures~\ref{fig:adv_ex_sgd} and~\ref{fig:adv_ex_sob}. As can be seen, the successful attacks with CW use large perturbations. Comparing the model trained with SGD, in Figure~\ref{fig:adv_ex_sgd}, to the model trained with Sobolev NCG, in Figure~\ref{fig:adv_ex_sob}, we observe how the adversarial attacks of the latter in general rely to a lesser extent on single pixels with large perturbation. That is, it seems that an SGD trained model can be attacked by large perturbations of single pixels, while the NCG trained model is more resilient to such attacks, which to some extent could explain the increased robustness to adversarial attacks of NCG.

\clearpage
\begin{figure}[t]
	\begin{centering}
		\begin{subfigure}{\textwidth}
			\includegraphics[width=0.328\linewidth,trim=2mm 0 4mm 0, clip]{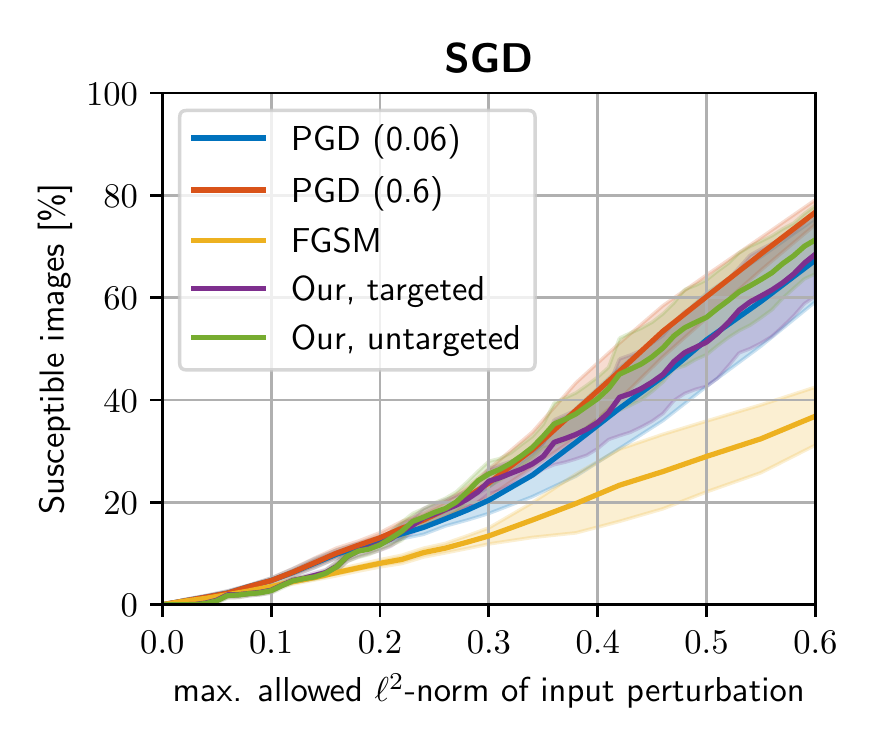} 
			\includegraphics[width=0.328\linewidth,trim=2mm 0 4mm 0, clip]{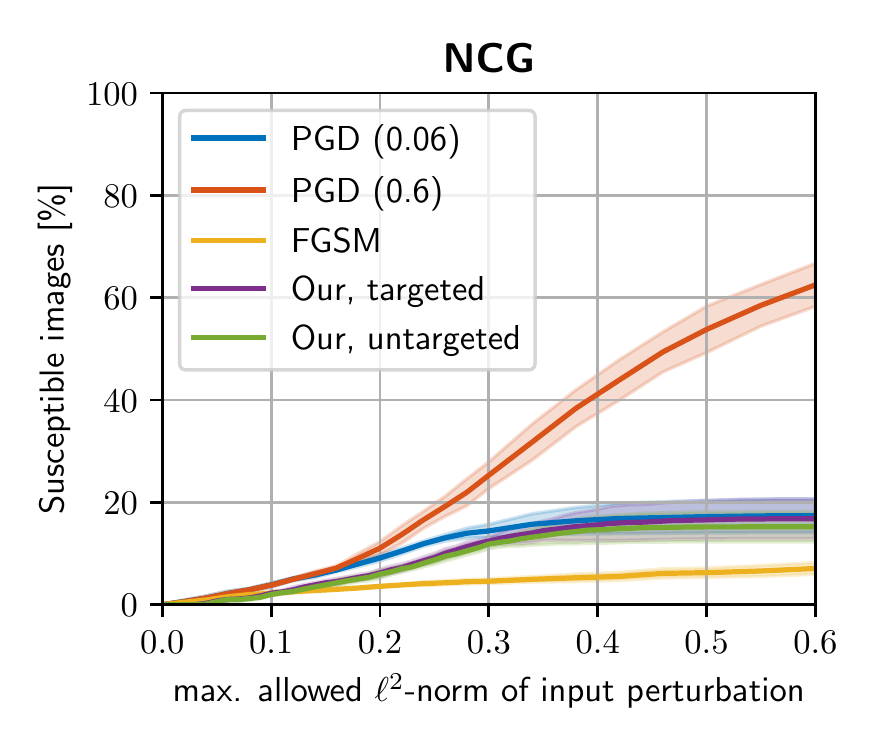} 
			\includegraphics[width=0.328\linewidth,trim=2mm 0 4mm 0, clip]{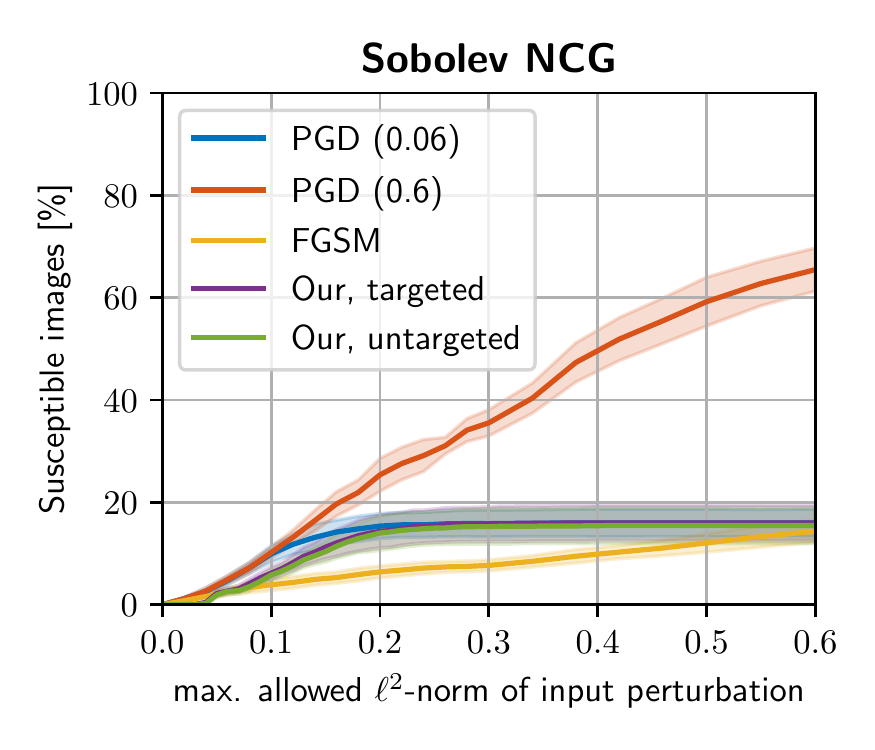} 
			\caption{ReLU activation, CE-loss}
		\end{subfigure}
		    
		\begin{subfigure}{\textwidth}
			\includegraphics[width=0.328\linewidth,trim=2mm 0 4mm 0, clip]{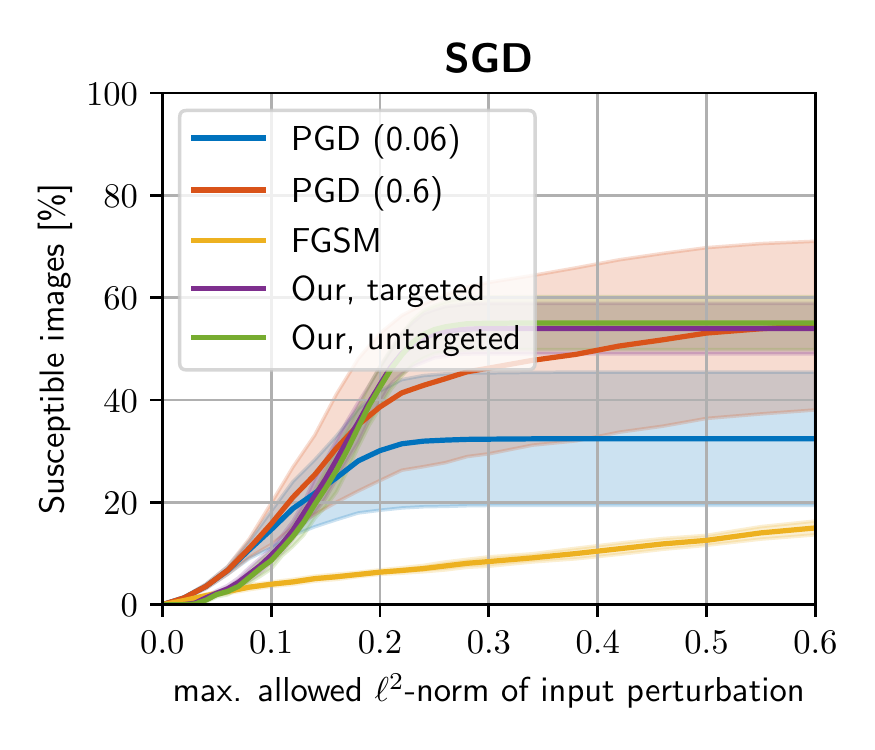} 
			\includegraphics[width=0.328\linewidth,trim=2mm 0 4mm 0, clip]{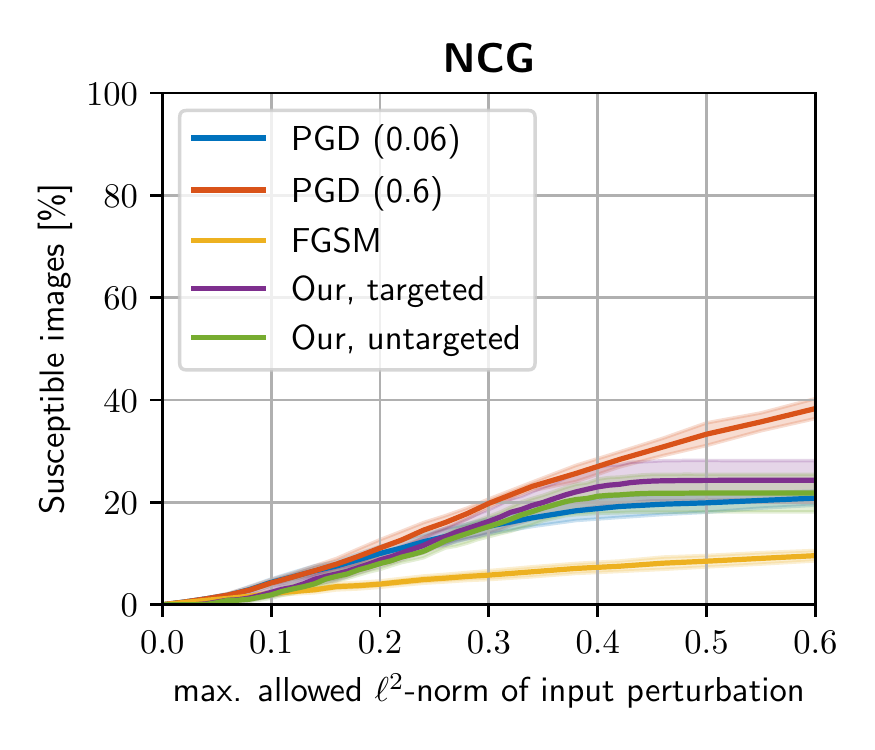} 
			\includegraphics[width=0.328\linewidth,trim=2mm 0 4mm 0, clip]{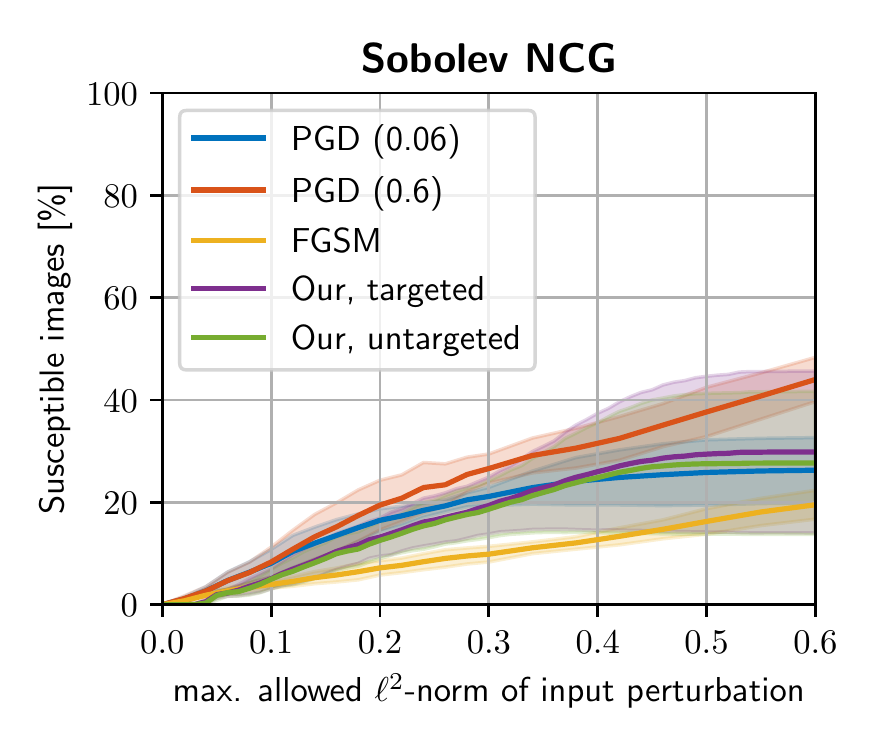} 
			\caption{ReLU activation, $\ell^2$-loss}
		\end{subfigure}
		    
		\begin{subfigure}{\textwidth}
			\includegraphics[width=0.328\linewidth,trim=2mm 0 4mm 0, clip]{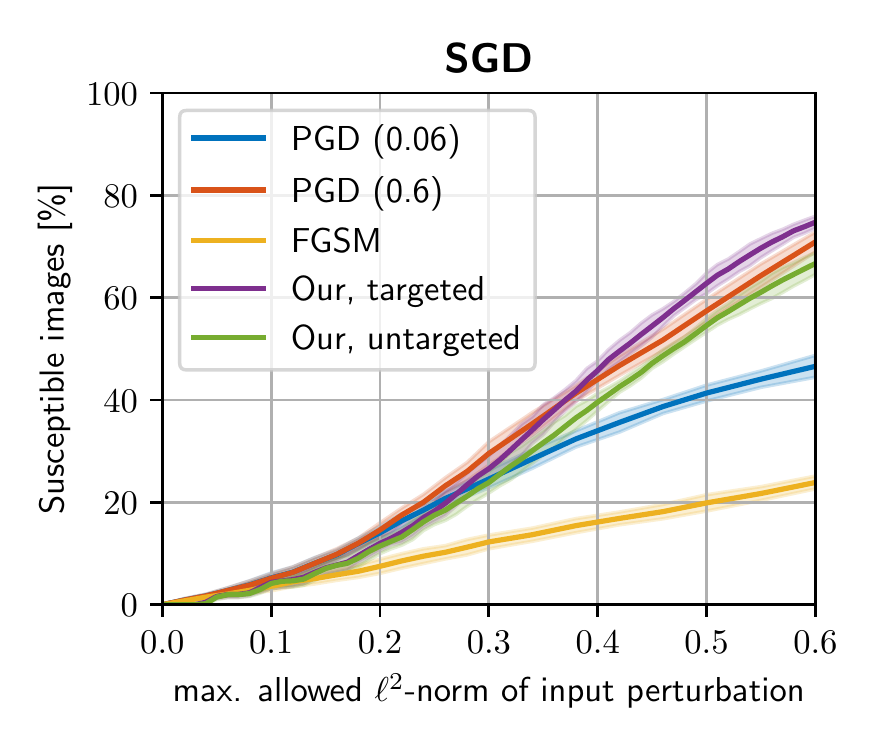} 
			\includegraphics[width=0.328\linewidth,trim=2mm 0 4mm 0, clip]{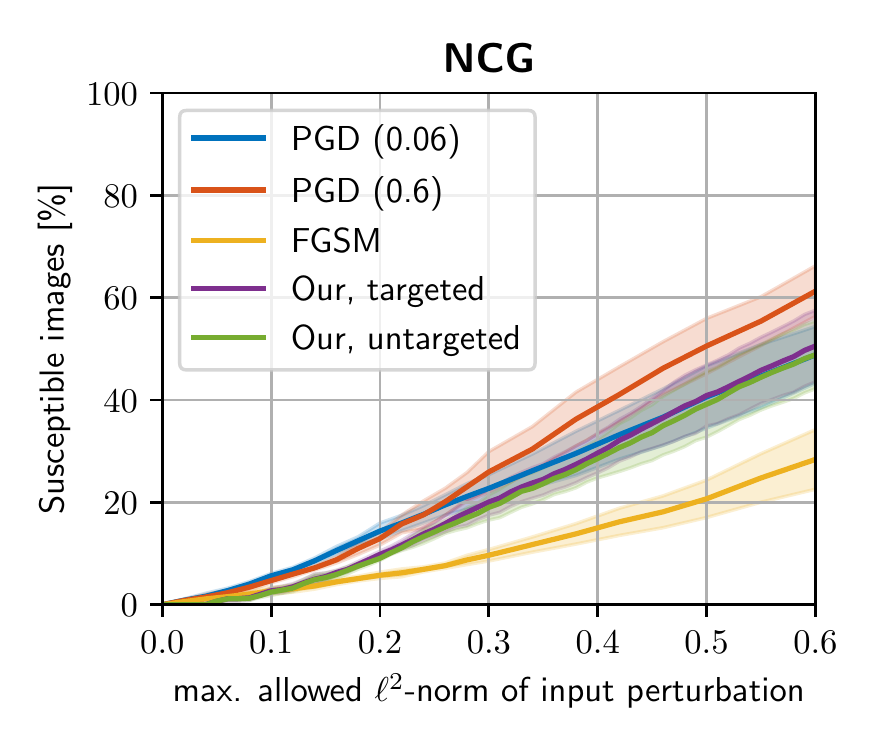} 
			\includegraphics[width=0.328\linewidth,trim=2mm 0 4mm 0, clip]{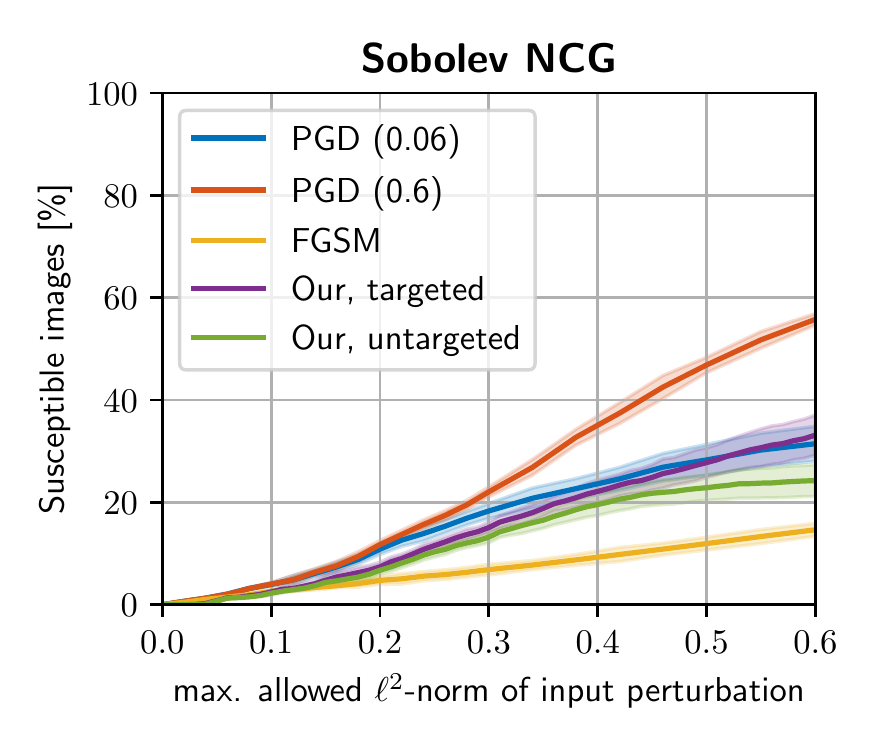} 
			\caption{tanh activation, CE-loss}
		\end{subfigure}
		    
		\begin{subfigure}{\textwidth}
			\includegraphics[width=0.328\linewidth,trim=2mm 0 4mm 0, clip]{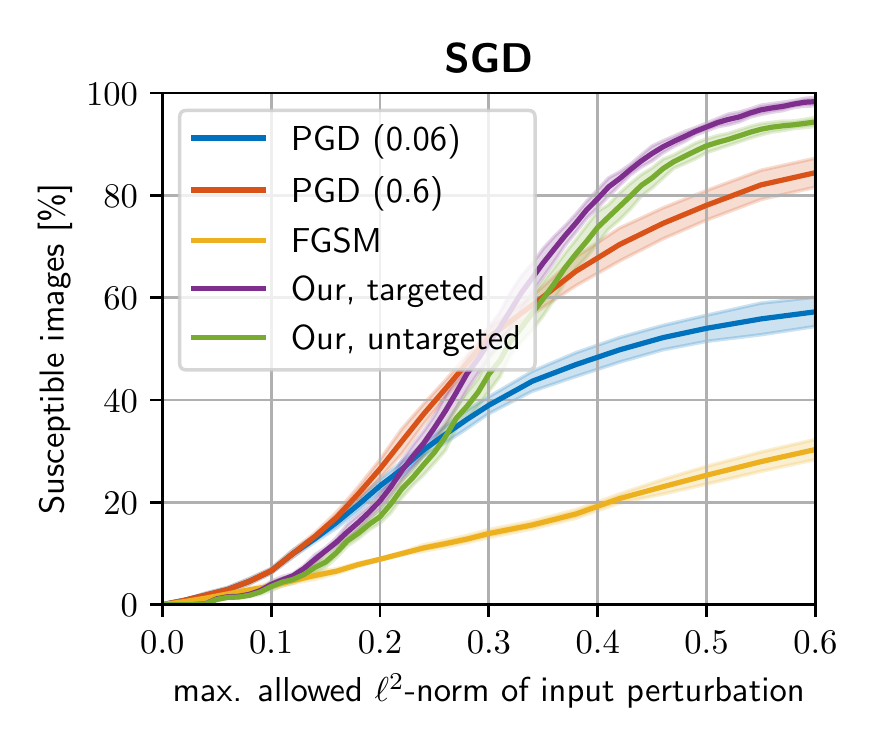} 
			\includegraphics[width=0.328\linewidth,trim=2mm 0 4mm 0, clip]{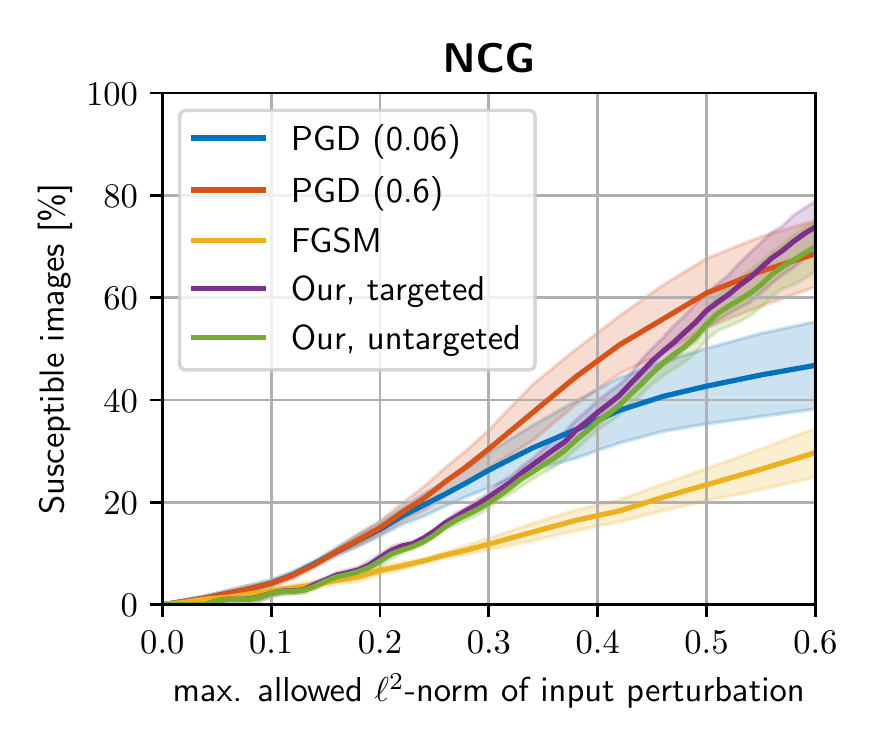} 
			\includegraphics[width=0.328\linewidth,trim=2mm 0 4mm 0, clip]{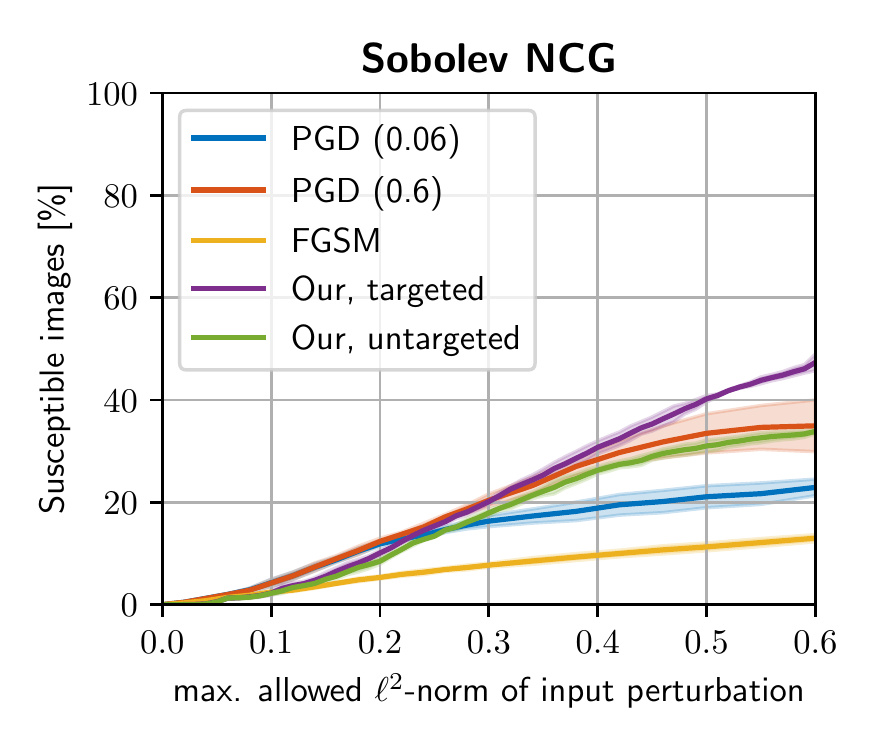} 
			\caption{tanh activation, $\ell^2$-loss}
		\end{subfigure}
		\caption{Comparison of different adversarial attacks on MNIST 14x14, for different training methods (columns) and different combinations of activation function and loss (rows).}
		\label{fig:adv_pgd}
	\end{centering}
	\vspace{-5mm}
\end{figure}
\clearpage
\phantom{.}
\vfill
\begin{table}[ht]
	\centering
	\caption{Percentage of MNIST 14x14 test set that are susceptible to CW attacks. The perturbations specify the mean and maximum $\ell^2$-norms of the successful attacks. Numbers also include standard deviations over 5 separate training runs.}
	\begin{tabular}{ll|llll}
		                         &                          & ReLU, CE         & ReLU, $\ell^2$    & tanh, CE         & tanh, $\ell^2$   \\
		\hline
		\multirow{3}{*}{SGD}     & Susceptible [\%]         & $42.83 \pm 2.83$ & $52.74 \pm 1.03$  & $27.87 \pm 1.36$ & $56.44 \pm 2.66$ \\
		                         & Mean perturb. [$\ell^2$] & $0.76 \pm 0.02$  & $0.96 \pm 0.01$   & $0.45 \pm 0.02$  & $0.88 \pm 0.02$  \\
		                         & Max perturb. [$\ell^2$]  & $2.35 \pm 0.19$  & $2.06 \pm 0.05$   & $1.64 \pm 0.45$  & $2.28 \pm 0.20$  \\
		         
		\hline
		\multirow{3}{*}{NCG}     & Susceptible [\%]         & $3.78 \pm 0.19$  & $26.56 \pm 2.70$  & $3.52 \pm 0.67$  & $28.15 \pm 2.01$ \\
		                         & Mean perturb. [$\ell^2$] & $0.13 \pm 0.01$  & $0.75 \pm 0.02$   & $0.09 \pm 0.01$  & $0.66 \pm 0.03$  \\
		                         & Max perturb. [$\ell^2$]  & $0.28 \pm 0.02$  & $1.99 \pm 0.12$   & $0.21 \pm 0.02$  & $2.13 \pm 0.19$  \\
		         
		\hline
		\multirow{3}{*}{Sobolev} & Susceptible [\%]         & $5.81 \pm 0.85$  & $37.45 \pm 16.17$ & $4.17 \pm 0.48$  & $54.91 \pm 1.48$ \\
		                         & Mean perturb. [$\ell^2$] & $0.13 \pm 0.01$  & $0.75 \pm 0.13$   & $0.11 \pm 0.00$  & $0.75 \pm 0.02$  \\
		                         & Max perturb. [$\ell^2$]  & $0.25 \pm 0.01$  & $2.39 \pm 0.36$   & $0.24 \pm 0.02$  & $2.51 \pm 0.30$  \\
	\end{tabular}
	\label{tab:cw}
\end{table}
\vfill
\begin{figure}[ht]
	\includegraphics[width=0.99\textwidth]{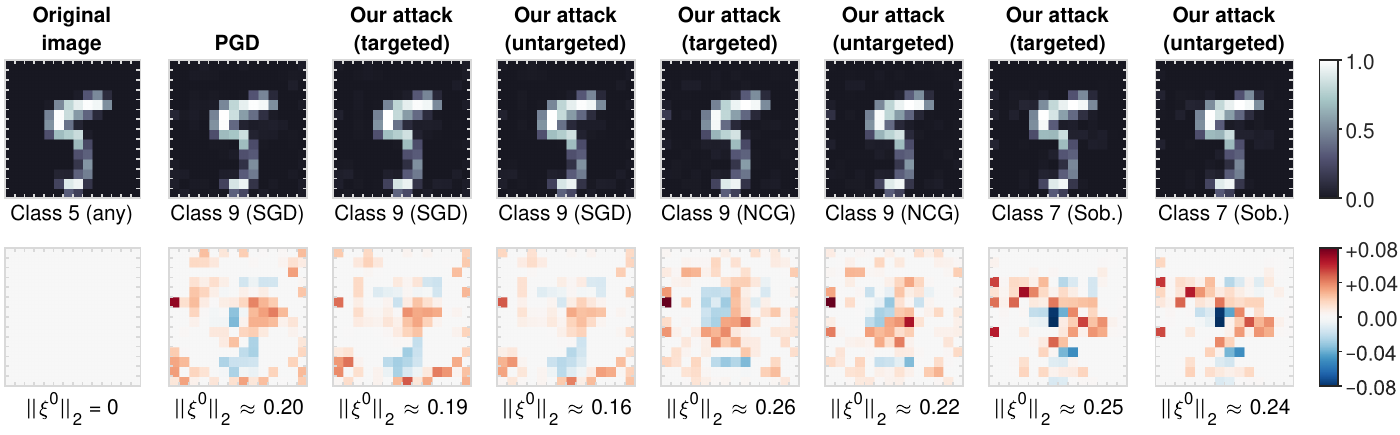}
	\caption{An image whose classification can be changed in several neural networks by adversarial attacks with input perturbations of similar $\ell^2$-norm. These ResNet/sNODE networks were trained with $\tanh$ as an activation function and CE-loss without weight decay. The first row depicts the affected input data $\widetilde{\bm{x}}^0 = \bm{x}^0 + \bm{\xi}^0$. The second row shows the perturbation $\bm{\xi}^0$ in each respective case.}
	\label{fig:5underAttack}
\end{figure}
\vfill
\pagebreak
\begin{figure}
	\centering
	\begin{subfigure}{\textwidth}
		\includegraphics[width=\linewidth,trim=25mm 10mm 5mm 10mm, clip]{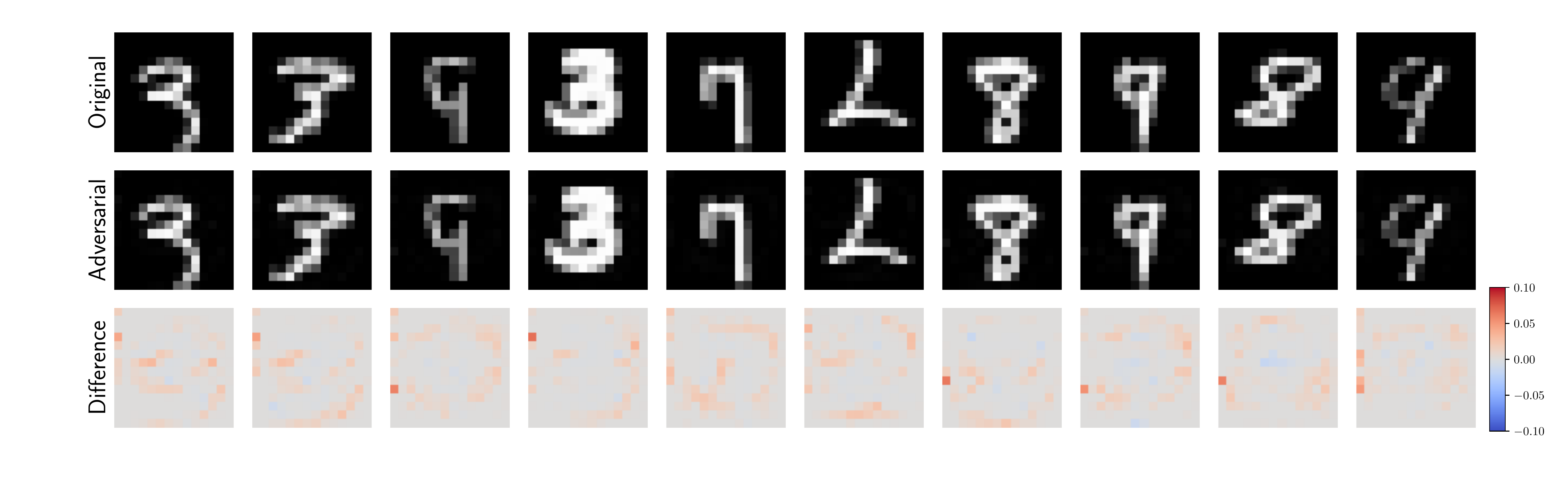}
		\caption{PGD, $\ell^2 = 0.1$}
	\end{subfigure}
	\vspace{3mm}
	    
	\begin{subfigure}{\textwidth}
		\includegraphics[width=\linewidth,trim=25mm 10mm 5mm 10mm, clip]{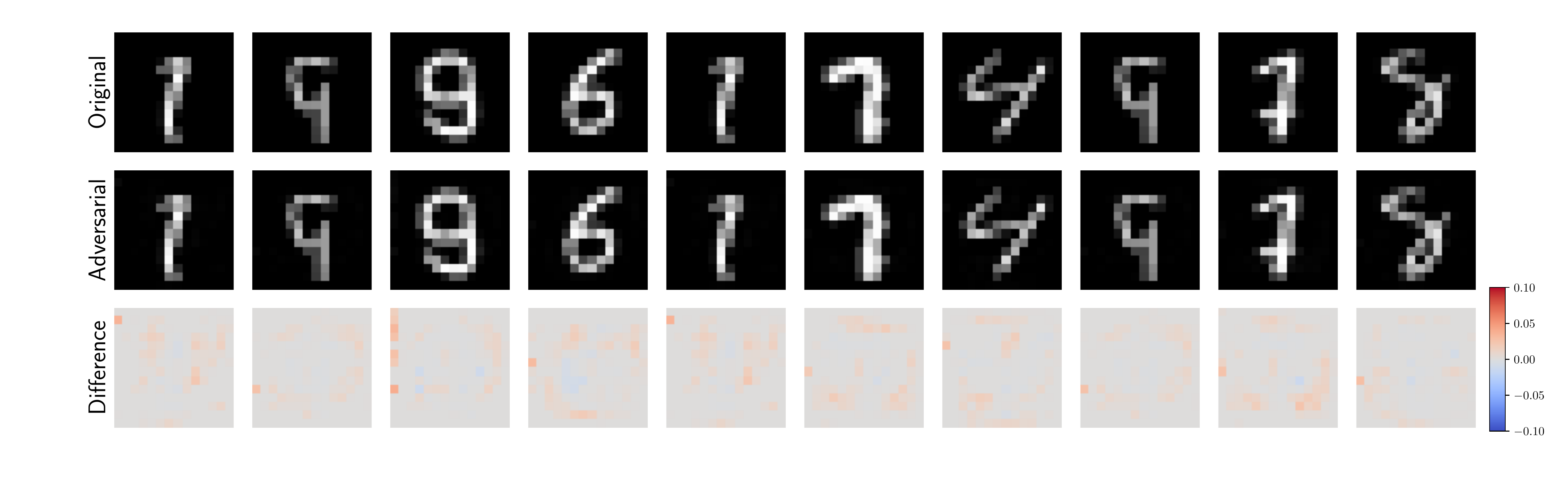}
		\caption{FGSM, $\ell^2 = 0.1$}
	\end{subfigure}
	\vspace{3mm}
	    
	\begin{subfigure}{\textwidth}
		\includegraphics[width=\linewidth,trim=25mm 10mm 5mm 10mm, clip]{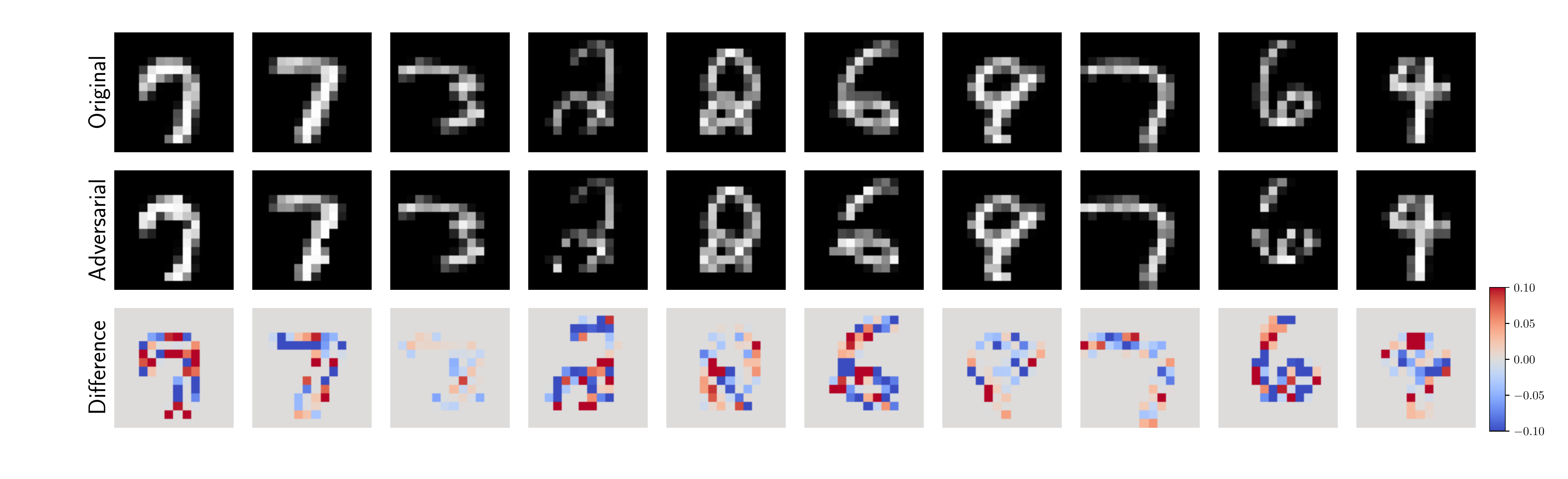}
		\caption{CW, $\ell^2 \approx 0.8$}
	\end{subfigure}
	\vspace{3mm}
	    
	\begin{subfigure}{\textwidth}
		\includegraphics[width=\linewidth,trim=25mm 10mm 5mm 10mm, clip]{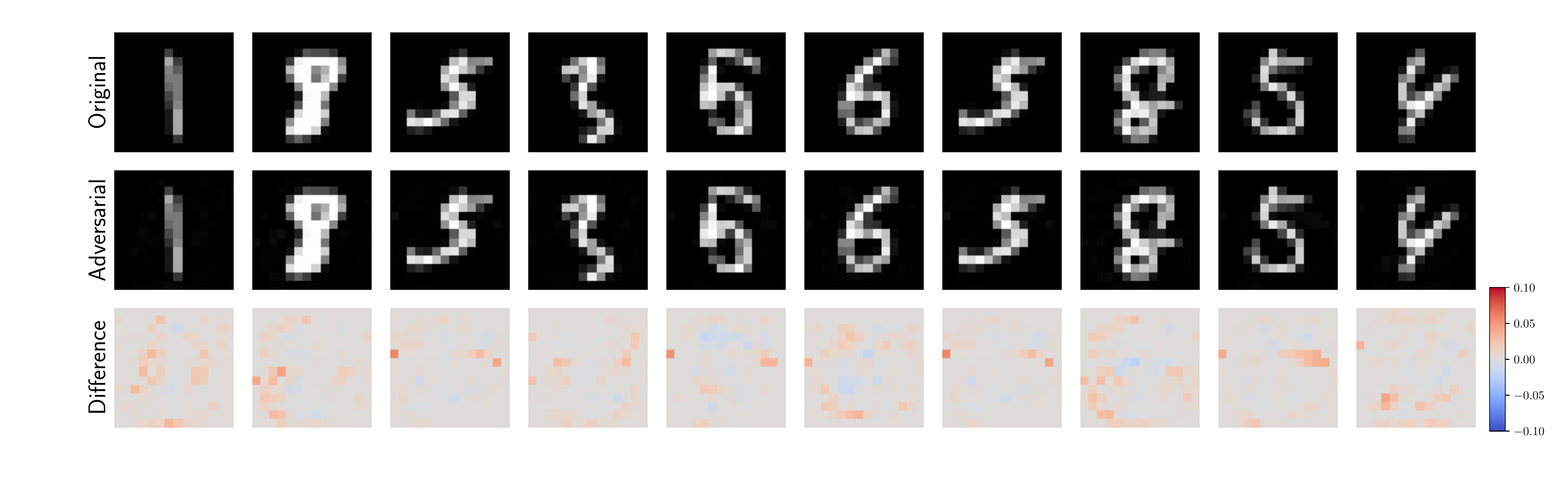}
		\caption{Our, $\ell^2 = 0.1$}
	\end{subfigure}
	\caption{Randomly selected examples of successful attacks, for an SGD trained model with ReLU activation and $\ell^2$ loss trained on MNIST 14x14. For PGD, FGSM, and our method, the adversarial perturbations have an $\ell^2$-norm of $0.1$. For the CW attack, the norms of successful attacks are larger.}
	\label{fig:adv_ex_sgd}
\end{figure}
\clearpage
\begin{figure}
	\centering
	\begin{subfigure}{\textwidth}
		\includegraphics[width=\linewidth,trim=25mm 10mm 5mm 10mm, clip]{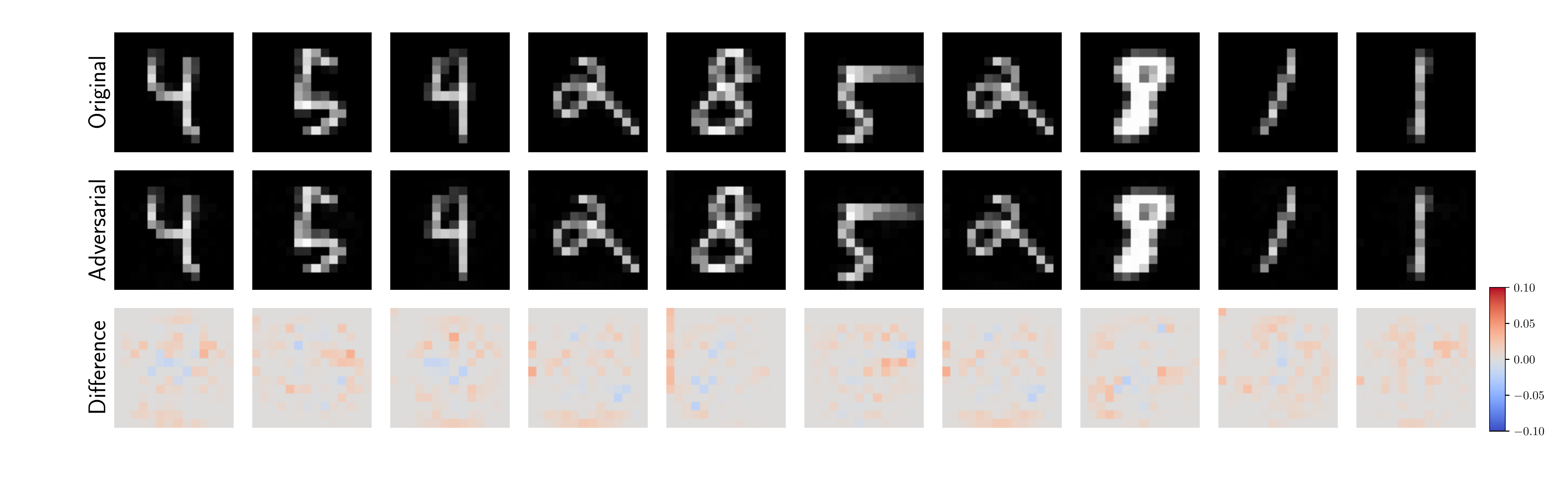}
		\caption{PGD, $\ell^2 = 0.1$}
	\end{subfigure}
	\vspace{3mm}
	    
	\begin{subfigure}{\textwidth}
		\includegraphics[width=\linewidth,trim=25mm 10mm 5mm 10mm, clip]{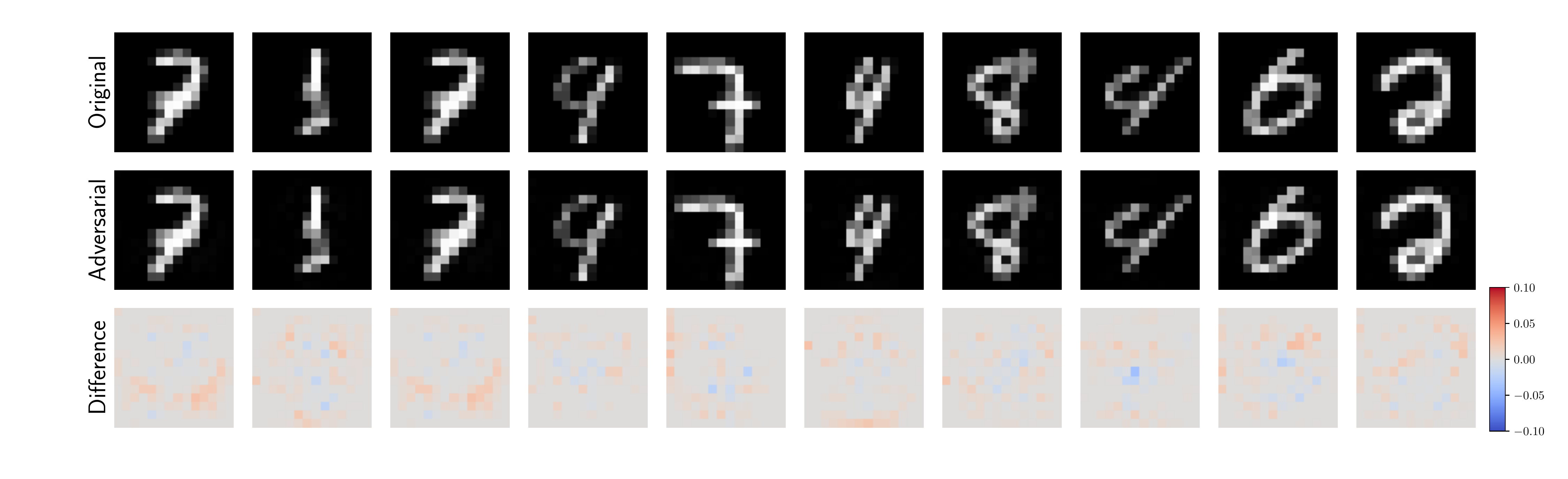}
		\caption{FGSM, $\ell^2 = 0.1$}
	\end{subfigure}
	\vspace{3mm}
	    
	\begin{subfigure}{\textwidth}
		\includegraphics[width=\linewidth,trim=25mm 10mm 5mm 10mm, clip]{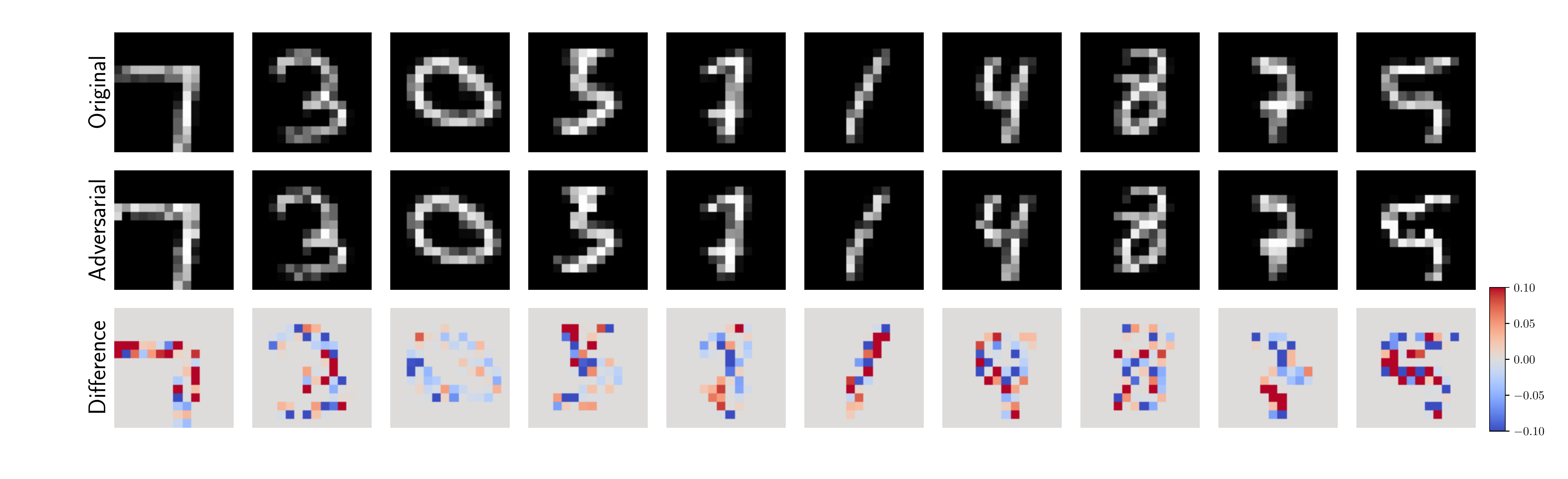}
		\caption{CW, $\ell^2 \approx 0.8$}
	\end{subfigure}
	\vspace{3mm}
	    
	\begin{subfigure}{\textwidth}
		\includegraphics[width=\linewidth,trim=25mm 10mm 5mm 10mm, clip]{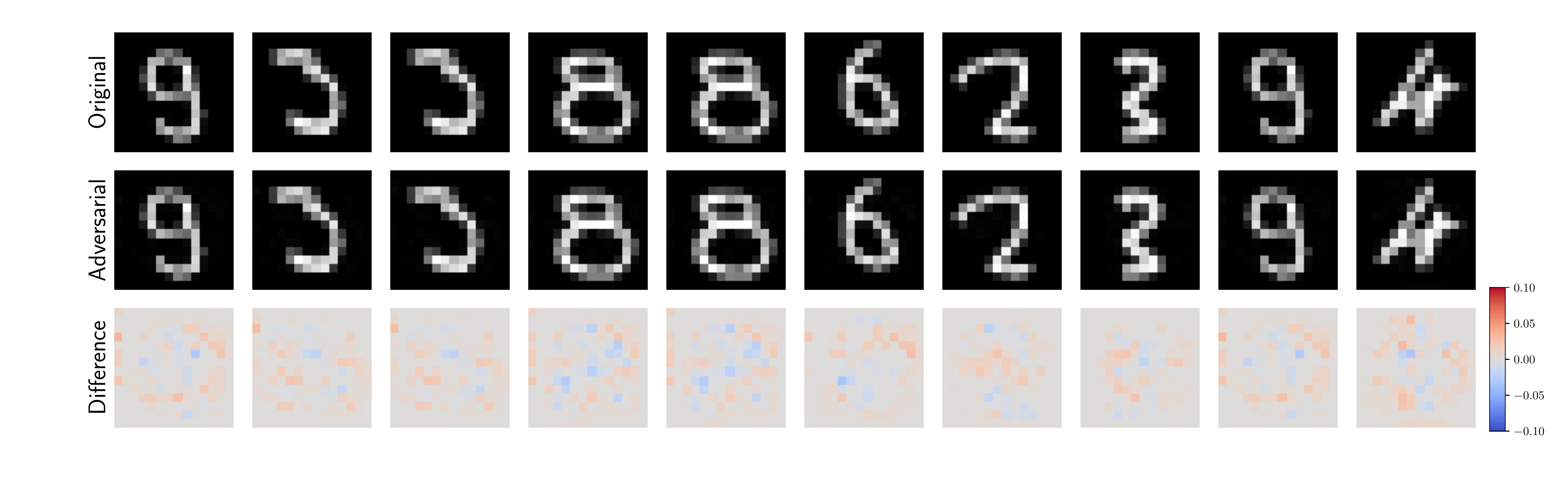}
		\caption{Our, $\ell^2 = 0.1$}
	\end{subfigure}
	\caption{Same as Figure~\ref{fig:adv_ex_sgd}, but for a model trained with Sobolev NCG.}
	\label{fig:adv_ex_sob}
\end{figure}
\end{document}